\PassOptionsToPackage{dvipsnames}{xcolor}
\documentclass{article} 
\usepackage[T1]{fontenc}
\usepackage[final]{colm2026_conference}

\usepackage{microtype}
\usepackage{hyperref}
\usepackage{url}
\usepackage{booktabs}
\usepackage{multirow}
\usepackage{makecell}
\usepackage{adjustbox}
\usepackage{amsmath}
\usepackage{amssymb}
\usepackage{amsthm}
\usepackage{mathtools}
\usepackage{bm}
\usepackage{dsfont}
\usepackage{standalone}
\usepackage{tikz}
\usetikzlibrary{arrows.meta,calc,shapes.arrows}
\usepackage[capitalize,noabbrev]{cleveref}
\usepackage{amsfonts}
\DeclareMathOperator*{\E}{\mathbb{E}}

\newtheorem{proposition}{Proposition}

\newcommand{\mask}{\texttt{[MASK]}}
\newcommand{\vecz}{\bm{z}}
\newcommand{\vecx}{\bm{x}}
\newcommand{\vece}{\bm{e}}
\newcommand{\vecc}{\bm{c}}
\newcommand{\vectheta}{\bm{\theta}}
\newcommand{\R}{\mathbb{R}}

\usepackage{lineno}
\usepackage[most]{tcolorbox}
\usepackage{algorithm}
\usepackage{algpseudocode}
\usepackage{subcaption}

\definecolor{creambox}{RGB}{253, 246, 236}
\definecolor{kwblue}{RGB}{0, 119, 170}
\definecolor{posteal}{RGB}{0, 121, 107}
\definecolor{tokgreen}{RGB}{46, 125, 50}
\definecolor{commentgray}{RGB}{160, 149, 133}

\definecolor{darkblue}{rgb}{0, 0, 0.5}
\hypersetup{colorlinks=true, citecolor=darkblue, linkcolor=darkblue, urlcolor=darkblue}

\title{Mask-Aware Policy Gradients for Diffusion Language Models}

\author{Haran Raajesh\thanks{Equal contribution.}, Kulin Shah\footnotemark[1], Adam Klivans, Philipp Kr\"ahenb\"uhl \\
Department of Computer Science \\
The University of Texas at Austin \\
\texttt{haranr@utexas.edu}
}

\begin{document}

\ifcolmsubmission
\linenumbers
\fi

\maketitle

\begin{abstract}
Reinforcement learning has proven effective for improving reasoning in large language models, but extending it to Masked Diffusion Language Models (MDLMs) remains challenging due to the intractability of the log-likelihood estimation.
Existing approaches approximate this log-likelihood by modeling only the token predictions, ignoring the order in which positions are unmasked during generation.
We observe that MDLM generation involves two decisions at each step: what tokens to place at each masked position and which positions to remask.
We formalize this as a two-stage action MDP, showing that the policy gradient naturally decomposes into a token term and a masking term.
Optimizing both terms achieves state-of-the-art results across mathematical reasoning and coding benchmarks, reaching \textbf{87.1\%} on GSM8K and \textbf{53.4\%} on MBPP.\footnote{Code available at \url{https://github.com/Haran71/mask-aware-policy-gradients}}
\end{abstract}

\begin{figure}[!h]
    \centering
    \includegraphics[width=1.0\linewidth]{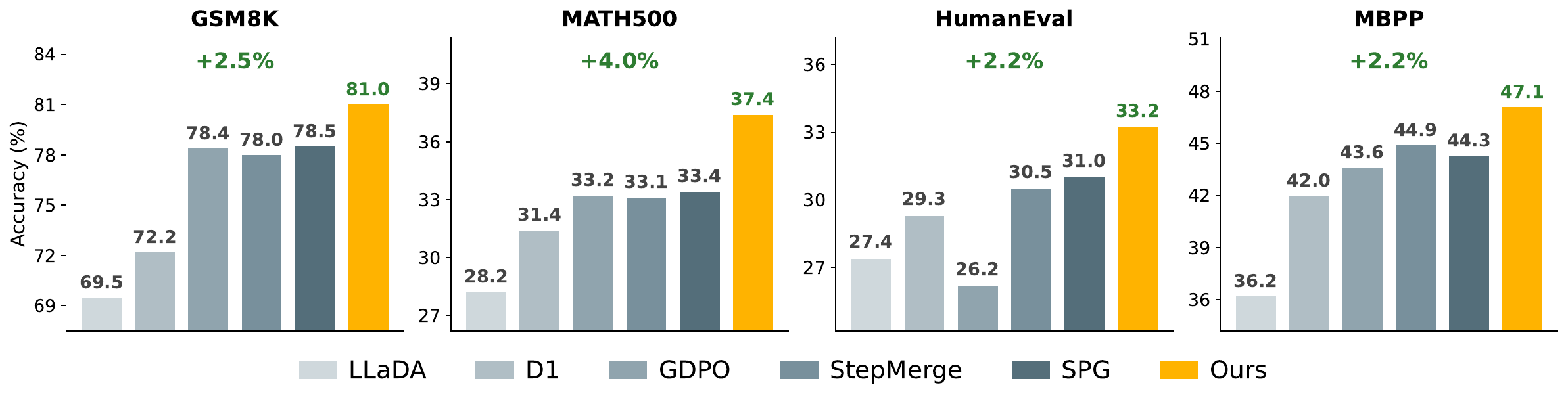}
    \caption{Test accuracy of our method and baseline methods on two mathematical reasoning and two code generation benchmarks. All methods use LLaDA-8B-Instruct as the base model and are evaluated with a generation length of 128. Full results are provided in Table~\ref{tab:main_results}.}
    \label{fig:teaser}
\end{figure}

\section{Introduction}
\label{sec:introduction}

Reinforcement learning has emerged as a powerful tool for improving large language models~\citep{deepseekr1, o1}, with applications spanning instruction following~\citep{instructgpt}, alignment~\citep{rlhf}, and complex reasoning~\citep{deepseekr1, o1}. At the core of these methods, policy gradient algorithms sample group rollouts autoregressively and estimate their log-likelihoods under the policy. The key to their success is the tractability and simplicity of the autoregressive log-likelihood: the generation trajectory is a sequence of token prediction actions, and the log-likelihood is their product.

Masked Diffusion Language Models (MDLMs)~\citep{mdlm, shi2024simplified, ou2024absorbing} offer an alternative to autoregressive LLMs, replacing the fixed left-to-right generation order with iterative unmasking from a fully masked sequence \citep{hoogeboom2022autoregressive, kim2025train}.
This enables parallel decoding and flexible generation orderings, and MDLMs have proven effective across text generation, code generation, mathematical reasoning, and protein sequence generation~\citep{lou2024discrete, mdlm, llada}.
However, this flexibility comes at a cost: unlike autoregressive models, MDLMs lack a tractable log-likelihood estimation and are instead trained via an evidence lower bound (ELBO).
This makes adapting policy gradient algorithms to MDLMs significantly more challenging.

Existing methods approximate the log-likelihood using ELBO variants~\citep{d1, gdpo, spg} or generation trajectories~\citep{d2}, focusing on the token prediction component.
However, the MDLM generation trajectory has a richer structure than its autoregressive counterpart.
While an autoregressive trajectory is a sequence of token prediction actions at fixed positions, an MDLM trajectory involves two decisions at each step: what token to place at the masked positions and which of these to remask.\footnote{We use the term \emph{remasking} following LLaDA~\citep{llada}: at each step the model predicts tokens at all masked positions, and those not selected to remain revealed are returned to $\mask$. Such positions were never actually unmasked, so this differs from self-correction methods that revert previously committed tokens.}
The trajectory log-likelihood therefore decomposes into a product of both token prediction and remasking probabilities.
Since the policy determines the unmasking order, this richer trajectory structure provides an additional signal for policy optimization that prior ELBO- and trajectory-based methods do not exploit.
See \cref{fig:intro} for an overview.

We formalize the unmasking order as part of the policy. We model MDLM generation as a two-stage action MDP: at each step, the model selects a token for each masked position and then decides which positions to remask.
This factorization yields a policy gradient that decomposes into two terms, one reinforcing token predictions and one reinforcing the ordering. The position term is computed from the model's own logits at no cost beyond the token-likelihood computation, and is compatible with any standard policy gradient algorithm. From a theoretical perspective, we show that ignoring the position component of the policy gradient can miss directions that improve expected return. Empirically, jointly optimizing both terms leads to consistent improvements over prior methods, achieving gains of up to \textbf{4.0\%} on MATH500, \textbf{2.9\%} on HumanEval, and \textbf{2.5\%} on GSM8K and MBPP (\cref{fig:teaser}).

\begin{figure}[t!]
    \centering
    \resizebox{0.9\linewidth}{!}{%
\definecolor{maskgray}{RGB}{210,210,210}%
\definecolor{newpred}{RGB}{230,230,120}%
\definecolor{kept}{RGB}{185,220,140}%
\definecolor{arrowblue1}{RGB}{170,190,235}%
\definecolor{arrowblue2}{RGB}{140,170,225}%
\definecolor{arrowblue3}{RGB}{110,150,215}%
\definecolor{arrowpurple1}{RGB}{200,175,225}%
\definecolor{arrowpurple2}{RGB}{180,150,210}%
\definecolor{arrowpurple3}{RGB}{160,130,195}%
\definecolor{verifiergray}{RGB}{170,170,170}%
\definecolor{rlboxcolor}{RGB}{245,225,190}%
\newcommand{\celltext}[4]{
    \node[cell, fill=#3] (c#1r#2) at ({#1*1.9}, {-#2*0.82}) {#4};}%
\begin{tikzpicture}[
    cell/.style={
        minimum width=1.5cm,
        minimum height=0.65cm,
        rounded corners=3pt,
        font=\small\sffamily,
        inner sep=0pt,
        outer sep=1pt,
        draw=none,
    },
    collabel/.style={font=\normalsize, anchor=north},
    arrlabel/.style={font=\footnotesize, align=center, anchor=south, inner sep=0pt},
]

\celltext{0}{0}{maskgray}{[MASK]}
\celltext{0}{1}{maskgray}{[MASK]}
\celltext{0}{2}{maskgray}{[MASK]}
\celltext{0}{3}{maskgray}{[MASK]}

\celltext{1}{0}{newpred}{A}
\celltext{1}{1}{newpred}{cat}
\celltext{1}{2}{newpred}{on}
\celltext{1}{3}{newpred}{tree}

\celltext{2}{0}{kept}{A}
\celltext{2}{1}{maskgray}{[MASK]}
\celltext{2}{2}{maskgray}{[MASK]}
\celltext{2}{3}{maskgray}{[MASK]}

\celltext{3}{0}{kept}{A}
\celltext{3}{1}{newpred}{book}
\celltext{3}{2}{newpred}{on}
\celltext{3}{3}{newpred}{diffusion}

\celltext{4}{0}{kept}{A}
\celltext{4}{1}{maskgray}{[MASK]}
\celltext{4}{2}{kept}{on}
\celltext{4}{3}{kept}{diffusion}

\celltext{5}{0}{kept}{A}
\celltext{5}{1}{newpred}{paper}
\celltext{5}{2}{kept}{on}
\celltext{5}{3}{kept}{diffusion}

\celltext{6}{0}{kept}{A}
\celltext{6}{1}{kept}{paper}
\celltext{6}{2}{kept}{on}
\celltext{6}{3}{kept}{diffusion}

\foreach \i/\lab in {0/{$\bm{z}^0$}, 1/{$\hat{\bm{z}}^1$}, 2/{$\bm{z}^1$},
                     3/{$\hat{\bm{z}}^2$}, 4/{$\bm{z}^2$}, 5/{$\hat{\bm{z}}^3$}} {
    \node[collabel] at ({1.9*\i}, {-3*0.82 - 0.55}) {\lab};
}
\node[collabel] at ({1.9*6}, {-3*0.82 - 0.55}) {$\bm{z}^3 = \bm{x}$};

\foreach \a/\b/\clr in {0/1/arrowblue1, 2/3/arrowblue2, 4/5/arrowblue3} {
    \draw[-{Stealth[length=5pt,width=4.5pt]}, color=\clr, line width=1.6pt]
        ({1.9*\a + 0.55}, 0.5) to[bend left=45] ({1.9*\b - 0.55}, 0.5);
}
\foreach \a/\b/\clr in {1/2/arrowpurple1, 3/4/arrowpurple2, 5/6/arrowpurple3} {
    \draw[-{Stealth[length=5pt,width=4.5pt]}, color=\clr, line width=1.6pt]
        ({1.9*\a + 0.55}, 0.5) to[bend left=45] ({1.9*\b - 0.55}, 0.5);
}

\foreach \a/\b/\clr in {0/1/arrowblue1, 2/3/arrowblue2, 4/5/arrowblue3} {
    \node[arrlabel, text=\clr!80!black] at ({1.9*(\a+\b)/2}, 0.92) {Sample\\Tokens};
}
\foreach \a/\b/\clr in {1/2/arrowpurple1, 3/4/arrowpurple2, 5/6/arrowpurple3} {
    \node[arrlabel, text=\clr!80!black] at ({1.9*(\a+\b)/2}, 0.92) {Remask\\Positions};
}

\coordinate (leftedge) at (-0.75, 0);
\node[anchor=west, font=\large\sffamily\itshape] at (-0.75, 1.9) {Inference};

\node[
    draw=verifiergray, fill=verifiergray!30, line width=0.8pt,
    minimum width=1.2cm,
    minimum height=3.11cm,
    rounded corners=6pt, inner sep=4pt,
] (verifier) at ({6*1.9 + 1.75}, {-1.5*0.82}) {};
\node[rotate=90, font=\small, anchor=center] at (verifier.center)
    {Verifier\quad $R(c, \bm{x})$};

\draw[-{Stealth[length=4pt]}, thick, gray!60]
    (verifier.south) |- ({6*1.9 + 0.25}, {-3*0.82 - 2.0});

\node[
    fill=rlboxcolor, rounded corners=5pt, inner sep=8pt, font=\large,
    anchor=east,
] (rlbox) at ({6*1.9 + 0.25}, {-3*0.82 - 2.0}) {%
    $R(c,\, \bm{x})\; \nabla_{\bm{\theta}} \log \pi_{\bm{\theta}}(\bm{z}^0, \hat{\bm{z}}^1, \bm{z}^1, \ldots,\, \bm{x})$%
};

\node[anchor=west, font=\large\sffamily\itshape, align=left] at (-0.75, {-3*0.82 - 2.0})
    {Reinforcement Learning\\Via Policy Gradients};

\end{tikzpicture}}
\caption{Overview of our approach. Given a prompt $\bm{c}$, a masked diffusion language model $\pi_{\bm{\theta}}$ generates text by iteratively predicting tokens and remasking positions over $T$ denoising steps, producing an extended trajectory that captures both what tokens were predicted and, through the remasking step, which positions were unmasked. A verifier scores the final output with reward $R(\bm{c}, \vecx)$, and we compute policy gradients over this full trajectory, jointly optimizing token predictions and position selection.}
    \label{fig:intro}
\end{figure}

Our contributions are as follows:
\begin{itemize}
    \item We formalize MDLM generation as a two-stage action MDP and show that the policy gradient decomposes into a token term and a masking term, providing theoretical grounding for optimizing the unmasking order.
    \item We derive position log-probabilities as a softmax over unmasking scores from the model's own logits, requiring no additional parameters or architectural changes, and no forward passes beyond those already used for the token likelihood.
    \item We achieve state-of-the-art results across mathematical reasoning and code generation benchmarks.
\end{itemize}

\begin{figure}[t!]
\centering
\captionsetup{hypcap=false}
\begin{minipage}[t]{0.48\textwidth}
\begin{algorithm}[H]
\caption{Autoregressive Generation}
\label{alg:ar}
\begin{algorithmic}[1]
\Require prompt $\bm{c}$
\Statex
\Statex
\For{$k = 1$ to $n$}
    \Statex \hspace{\algorithmicindent}\textcolor{black!60}{$\triangleright$ \textit{Sample next token}}
    \State $x_k \sim \pi_{\bm{\theta}}(\cdot \mid \bm{x}_{<k}, \bm{c})$
\EndFor
\Statex
\State \Return $\bm{x}_{1:n}$
\end{algorithmic}
\end{algorithm}
\end{minipage}
\hfill
\begin{minipage}[t]{0.48\textwidth}
\begin{algorithm}[H]
\caption{MDLM Generation}
\label{alg:mdlm}
\begin{algorithmic}[1]
\Require prompt $\bm{c}$, sequence length $n$
\State $\vecz^{0} \leftarrow [\mask]^n$
\For{$t = 1 \ldots T$}
    \Statex \hspace{\algorithmicindent}\textcolor{black!60}{$\triangleright$ \textit{Predict masked positions}}
    \State $\hat{\vecz}^{t} \sim \pi_{\bm{\theta}}(\cdot \mid \vecz^{t-1}, \bm{c})$
    \Statex \hspace{\algorithmicindent}\textcolor{black!60}{$\triangleright$ \textit{Remask heuristically}}
    \State $\vecz^{t} \leftarrow \text{remask}(\hat{\vecz}^{t}, \vecz^{t-1})$
\EndFor
\State \Return $\vecz^{T}$
\end{algorithmic}
\end{algorithm}
\end{minipage}
\end{figure}

\section{Preliminaries}
\label{sec:preliminaries}

A language model $\pi_{\bm{\theta}}$ produces sequences of tokens $\bm{x}_{1:n} \sim \pi_{\bm{\theta}}(\cdot \!\mid\! \bm{c})$ conditioned on an input $\bm{c}$.
The joint distribution over sequences is exponentially large.
Autoregressive language models~\citep{deepseekr1, team2025kimi, instructgpt, o1}, decompose the distribution into a product of per-token conditionals:
\begin{equation}
    \pi_{\bm{\theta}}(\bm{x}_{1:n} \mid \bm{c}) = \prod_{k=1}^{n} \pi_{\bm{\theta}}(x_k \mid \bm{x}_{<k}, \bm{c}).
\label{eq:ar_logprob}
\end{equation}
This makes likelihood estimation tractable, and simplifies both training and generation, see \cref{alg:ar}.

\label{sec:mdlm}
\paragraph{Masked Diffusion Language Models (MDLMs)}~\citep{mdlm, shi2024simplified, ou2024absorbing} formulate text generation as an iterative denoising process.
The forward process corrupts a clean sequence $\bm{x}_{1:n}$ into $\bm{z}^t_{1:n}$ by independently replacing each token with a special $\mask$ token with probability $\alpha_t \in [0, 1]$:
\begin{equation}
    \bm{z}^t_{k} = \begin{cases} \bm{x}_k & \text{with probability } 1 - \alpha_t \\ \mask & \text{with probability } \alpha_t \end{cases}
\end{equation}
Here $t=1\ldots T$ is the diffusion step, and $\alpha_t$ is the corresponding masking probability.
We write the forward process as $\bm{z}^t_{1:n} \sim q_{t}(\cdot \mid \bm{x}_{1:n})$.
At the final step $t=T$, the sequence $\vecz^T=\bm{x}$ remains uncorrupted ($\alpha_T=0$).
At the first step $t=0$, $\vecz^0$ is fully masked ($\alpha_0=1$).

A neural network $\pi_{\bm{\theta}}$ learns to reverse this process by predicting the original tokens at masked positions from a corrupted input $\bm{z}$, maximizing the Evidence Lower Bound (ELBO) of the data log-likelihood:
\begin{equation}
    \mathcal{L}_{\mathrm{ELBO}}(\bm{x};\bm{\theta}) = \E_{\substack{t \sim \mathcal{U}\{1,\ldots,T\} \\ \bm{z}^t \sim q_{t}(\cdot \mid \bm{x})}} \bigg[\sum_{k=1}^{n} \frac{1}{\alpha_t} \mathds{1}(\bm{z}^t_{k} = \mask) \log \pi_{\bm{\theta}}(\bm{x}_k \mid \bm{z}^t) \bigg],
\label{eq:elbo}
\end{equation}
In essence, the model learns to predict the original token at each masked position, with unmasked positions left unchanged.

\paragraph{Generation in MDLMs} proceeds as described in \cref{alg:mdlm}:
We start from a completely masked sequence ${\vecz^0 = [\mask]^n}$ of length $n$ and iteratively denoise the sequence over $T$ steps with masking schedule $1 = \alpha_0 > \alpha_1 > \ldots > \alpha_T = 0$.
Each step $t$ consists of a \emph{prediction} followed by a \emph{remasking}.
First, the model predicts tokens at all masked positions:
${\hat \vecz^{t} \sim \pi_{\bm{\theta}}(\cdot \mid \vecz^{t-1})}$, with $\hat z^{t}_k = z^{t-1}_k$ for already-unmasked positions ($z^{t-1}_k \ne \mask$).
Then, a subset ${\mathcal{U}_t \subseteq \{k : z^{t-1}_k = \mask\}}$ of newly predicted positions is selected to remain unmasked, and the rest are remasked:
\begin{equation}
    z^{t}_k = \begin{cases} \hat z^{t}_k & \text{if } k \in \mathcal{U}_t \text{ or } z^{t-1}_k \ne \mask, \\ \mask & \text{otherwise.} \end{cases}
\label{eq:remask}
\end{equation}
This produces a trajectory $\mathbf{z} = (\vecz^0, \vecz^1, \ldots, \vecz^T)$ where $\vecz^T = \bm{x}_{1:n}$ is the final output.
We write $\mathbf{z} \sim \pi_{\bm{\theta}}(\cdot \!\mid\! \bm{c})$ for the distribution over trajectories induced by the model $\pi_{\bm{\theta}}$ when generating from prompt $\bm{c}$.

In theory, remasking should be random to mimic the training process.
In practice however, a deterministic strategy that selects positions to unmask based on confidence scores derived from the model's logits~\citep{chang2022maskgit, kim2025train} often performs much better.
For example, the model may select the top-$k$ positions with the highest maximum logit values to unmask at each step.
This allows the model to focus on positions it is most confident about, potentially improving generation quality and convergence speed.

\subsection{Reinforcement learning for language models}
\label{sec:rl_dllm}

RL optimizes the expected return
$$
J(\bm{\theta}) = \E_{\bm{x} \sim \pi_{\bm{\theta}}(\cdot \mid \bm{c})}[R(\bm{c}, \bm{x})].
\label{eq:pg}
$$
In the context of language models, $\bm{x}$ is the output produced by the model and $R$ is the reward obtained via a rule-based verifier or a learned reward model. Most RL algorithms for language models rely on a policy gradient estimate:
${\nabla_{\bm{\theta}} J = \mathbb{E}_{\bm{x} \sim \pi_{\bm{\theta}}} \big[ R(\bm{c}, \bm{x}) \, \nabla_{\bm{\theta}} \log \pi_{\bm{\theta}}(\bm{x} \mid \bm{c}) \big]}$.

In reinforcement learning, $\bm{x} \sim \pi_{\bm{\theta}}$ is referred to as the action.
For autoregressive models, this action is a sequence of token predictions, and $\log \pi_{\bm{\theta}}(\bm{x} \mid \bm{c})$ is directly available from \cref{eq:ar_logprob}: the chain rule decomposition gives a sum of per-token log-probabilities, computed in a single forward pass.
This makes the policy gradient fully tractable.

For MDLMs, however, $\log \pi_{\bm{\theta}}(\bm{x} \mid \bm{c})$ is \emph{intractable} to compute, as it is defined by the marginal over all trajectories that produce $\bm{x}$:
\begin{equation}
    \pi_{\bm{\theta}}(\bm{x} \mid \bm{c}) = \sum_{\mathbf{z} : \vecz^T = \bm{x}} \pi_{\bm{\theta}}(\mathbf{z} \mid \bm{c}).
\end{equation}
Existing approximations fall into two families:

\paragraph{Evidence bound methods}~\citep{d1, gdpo, spg} estimate the log-likelihood by evaluating the model on randomly masked versions of the completed sequence, typically computing some variant of the ELBO objective (Eq.~\ref{eq:elbo}) used for training.

\paragraph{Trajectory-based methods}~\citep{d2} rewrite the expected return as
\begin{equation}
    J(\bm{\theta}) =\!\!\!\E_{\bm{x} \sim \pi_{\bm{\theta}}(\cdot \mid \bm{c})}\!\!\![R(\bm{c}, \bm{x})] = \sum_{\bm{x}}\pi_{\bm{\theta}}(\bm{x} \mid \bm{c})R(\bm{c}, \bm{x}) = \sum_{\mathbf{z}} \pi_{\bm{\theta}}(\mathbf{z} \mid \bm{c})R(\bm{c}, \vecz^T) =\!\!\!\!\E_{\mathbf{z} \sim \pi_{\bm{\theta}}(\cdot \mid \bm{c})}\!\!\![R(\bm{c}, \vecz^T)].
\label{eq:exact_traj}
\end{equation}
The policy gradient estimate of the rephrased return is given by
\begin{align}
    \nabla_{\theta} & J(\bm{\theta}) \!=\! \sum_{\mathbf{z}} R(\bm{c}, \vecz^T) \frac{ \nabla_{\theta} \pi_{\bm{\theta}}(\mathbf{z} \!\mid\! \bm{c}) }{ \pi_{\bm{\theta}}(\mathbf{z} \!\mid\! \bm{c}) } \pi_{\bm{\theta}}(\mathbf{z} \!\mid\! \bm{c})    =  \E_{\mathbf{z} \sim \pi_{\bm{\theta}}(\cdot \mid \bm{c})}[ R(\bm{c}, \vecz^T) \nabla_{\theta} \log \pi_{\bm{\theta}}(\mathbf{z} \!\mid\! \bm{c}) ]
    \label{eq:pg_trajectory}
\end{align}
Unlike ELBO methods, trajectory-based methods compute an exact policy gradient estimate given we can compute $\log \pi_{\bm{\theta}}(\mathbf{z} \!\mid\! \bm{c})$ for any sampled trajectory $\mathbf{z}$:
\begin{align}
    \pi_{\theta}(\mathbf{z} \!\mid\! \vecc) = \pi_{\theta}( \vecz^1, \ldots, \vecz^T \!\mid\! \vecc) = \prod_{t=1}^{T}  \pi_{\theta}( \vecz^{t} \!\mid\! \vecc, \vecz^{\leq t-1}) = \prod_{t=1}^{T}  \pi_{\theta}( \vecz^{t} \!\mid\! \vecc, \vecz^{t-1}),
    \label{eq:traj_likelihood}
\end{align}
where the last step follows since the diffusion process is Markovian.
\citet{d2} approximate ${\log \pi_{\bm{\theta}}(\mathbf{z} \!\mid\! \bm{c}) \approx \sum_{t} \log\pi(\vecz^{t} | \vecz^{t-1}, c)}$, but ignore the remasking step.

We show that a proper treatment of this full probability estimate leads to a position term in the policy gradient estimate, which in practice greatly improves performance.


\begin{figure}[t!]
    \centering
    \resizebox{\linewidth}{!}{
\definecolor{cellblue}{RGB}{175,200,235}
\definecolor{cellbluelight}{RGB}{200,218,245}
\definecolor{highlightblue}{RGB}{160,195,240}

\sffamily
\begin{tikzpicture}[
    cell/.style={
        minimum width=1.5cm,
        minimum height=0.7cm,
        rounded corners=3pt,
        font=\small\sffamily,
        inner sep=2pt,
        outer sep=1pt,
        draw=none,
    },
    celloutline/.style={
        cell,
        draw=black,
        line width=1.8pt,
    },
    colmath/.style={font=\normalsize, anchor=south, align=center},
]

\def\colsep{3.4}
\def\colA{0}
\pgfmathsetmacro{\colB}{1*\colsep}
\pgfmathsetmacro{\colC}{2*\colsep}
\pgfmathsetmacro{\colD}{3*\colsep}
\def\rowsp{0.85}
\def\labelY{0.55}

\node[colmath] at (\colA, \labelY) {$\bm{z}^{t-1}$};
\node[cell, fill=cellblue] (a0) at (\colA, 0) {[MASK]};
\node[cell, fill=cellblue] (a1) at (\colA, {-1*\rowsp}) {[MASK]};

\node[colmath] at (\colB, \labelY) {$\hat{\bm{z}}^t \sim$\\$\pi_{\bm{\theta}}(\,\cdot \mid \bm{z}^{t\text{-}1}, \bm{c}\,)$};
\node[cell, fill=cellbluelight] (b0) at (\colB, 0) {$\hat{z}_0^t$};
\node[cell, fill=cellbluelight] (b1) at (\colB, {-1*\rowsp}) {$\hat{z}_1^t$};

\node[colmath] at (\colC, \labelY) {$\mathcal{U}_t \sim$\\$p_{\text{unmask}}(\,\cdot \mid \bm{z}^{t\text{-}1}, \hat{\bm{z}}^t\,)$};
\node[cell, fill=cellbluelight] (d0) at (\colC, 0) {$\hat{z}_0^t$};
\node[celloutline, fill=highlightblue] (d1) at (\colC, {-1*\rowsp}) {$\hat{z}_1^t$};

\node[colmath] at (\colD, \labelY) {$\bm{z}^t =$\\$f_{\text{remask}}(\hat{\bm{z}}^t,\, \mathcal{U}_t)$};
\node[cell, fill=cellblue] (e0) at (\colD, 0) {[MASK]};
\node[cell, fill=highlightblue] (e1) at (\colD, {-1*\rowsp}) {$\hat{z}_1^t$};

\draw[-{Stealth[length=5pt]}, thick, gray!50] (a0.east) -- (b0.west);
\draw[-{Stealth[length=5pt]}, thick, gray!50] (a1.east) -- (b1.west);
\draw[-{Stealth[length=5pt]}, thick, gray!50] (b0.east) -- (d0.west);
\draw[-{Stealth[length=5pt]}, thick, gray!50] (b1.east) -- (d1.west);
\draw[-{Stealth[length=5pt]}, thick, gray!50] (d0.east) -- (e0.west);
\draw[-{Stealth[length=5pt]}, thick, gray!50] (d1.east) -- (e1.west);

\end{tikzpicture}}
    \caption{Illustration of a single denoising step with probabilistic remasking. (1)~The model receives a partially masked sequence $\vecz^{t-1}$ and predicts tokens at all masked positions, producing $\hat{\vecz}^{t}$. (2)~Rather than deterministically unmasking the top-$K$ most confident positions, we sample a subset $\mathcal{U}_t$ from a Plackett--Luce distribution proportional to the predicted token log-likelihoods (shown as the bar chart). (3)~The remasking operator $f_{\text{remask}}$ retains the selected positions and remasks all others, yielding $\vecz^{t}$. In this example, position 1 is selected ($\mathcal{U}_t = \{1\}$), so only $\hat{z}_1^{t}$ is kept while position 0 is remasked.}
    \label{fig:method}
\end{figure}

\section{Method}
\label{sec:method}




We derive a policy gradient estimator for MDLMs by modeling the probability of trajectories using both token prediction and remasking in the trajectory likelihood.
Denoising from sequence $\vecz^{t-1}$ to $\vecz^{t}$ can be viewed as a two-stage process:
predicting mask tokens $\hat{\bm{z}}^t \sim \pi_\theta(\cdot\mid\vecz^{t-1})$, and selecting positions to unmask $\mathcal U_t$.
A masking function $\bm{z}^t = f_\text{remask}(\hat{\bm{z}}^t, \mathcal U_t \cup \bm{z}^{t-1} \ne \mask)$ remasks all remaining tokens.
\cref{fig:method} shows an overview.

The main issue with the above formulation is that the standard greedy top-$K$ mask $\mathcal{U}_t$ depends on $\pi_\theta$, but is not differentiable, and thus a policy gradient estimate is not possible.
We instead replace the greedy remasking with a probabilistic version both during generation, and reinforcement learning.
This leads to a proper gradient estimate, and better empirical performance.

\subsection{Probabilistic remasking}
\label{sec:position_likelihood}

Let $v^t_k = \log \pi(\hat z^t_k | \bm{z}^{t-1})$ be the log-likelihood of the sampled token $\hat z^t_k$ at position $k$ and step $t$.
In standard top-$K$ remasking, the $K$ positions with the highest scores $v^t_k$ are deterministically selected for unmasking.
This hard selection is not differentiable with respect to the model parameters $\vectheta$, preventing gradient-based optimization of the remasking.
We replace this with a \emph{probabilistic} variant: instead of selecting the top-$K$ positions deterministically, we sample $K$ positions \emph{without replacement} from a categorical distribution whose probabilities are proportional to $\exp(v^t_k / \tau)$ over all masked positions, where $\tau > 0$ is a temperature parameter.
As $\tau \to 0$, this recovers the greedy top-$K$ selection; for $\tau > 0$, it defines a proper distribution over position subsets whose log-probability can be included in the trajectory likelihood and optimized via policy gradients.
Specifically, we sample the unmasking set $\mathcal{U}_t = \{u_1, u_2, \ldots, u_K\}$ sequentially without replacement, following the Plackett--Luce model~\citep{plackett1975analysis, luce1959individual}:
${p_\text{unmask}(\mathcal{U}_t\mid \bm{z}^{t-1}, \hat{\bm{z}}^{t}) = \prod_{i=1}^K p_\text{unmask}(u_i=x | u_{<i}, \bm{z}^{t-1}, \hat{\bm{z}}^{t})}$ with
\begin{align}
p_\text{unmask}(u_i=x | u_{<i}, \bm{z}^{t-1}, \hat{\bm{z}}^{t}) = \frac{1_{[\bm{z}^{t-1}_{x}=\mask \wedge x \notin u_{<i}]} \exp(v^t_x / \tau)}{\sum_{j=1}^n 1_{[\bm{z}^{t-1}_{j}=\mask \wedge j \notin u_{<i}]} \exp(v^t_j / \tau)},
\label{eq:pos_prob}
\end{align}
where each distribution models a softmax over currently masked tokens.
Masked tokens with a higher log-likelihood $v^t_k$ have a higher chance of being unmasked.
Tokens with a low log-likelihood will likely be remasked.

The remasking probability $p_\text{remask}$ is then
\begin{align*}
    p_\text{remask}(\bm{z}^t | \mathcal{U}_t, \bm{z}^{t-1}, \hat{\bm{z}}^{t}) = 1_{[\bm{z}^t = f_\text{remask}(\hat{\bm{z}}^{t}, \mathcal{U}_t \cup \bm{z}^{t-1}=\mask)]}
\end{align*}
where $f_\text{remask}$ is the remasking operator that applies the mask $\mathcal{U}_t$ to the sequence $\bm{z}^t$:
\begin{align*}
f_\text{remask}(\hat{\bm{z}}^{t}, S)_k = \begin{cases}\hat{\bm{z}}^{t}_k&\text{if $k \in S$}\\\mask&\text{otherwise}\end{cases}
\end{align*}
Unlike greedy top-$K$ selection, this probabilistic formulation defines a proper distribution $p(\mathcal{U}_t)$ over position subsets whose log-probability is differentiable with respect to the model parameters. The remasking operator $f_\text{remask}$ and $p_\text{remask}$ are deterministic and do not directly depend on the model parameters.

\subsection{Reinforcement learning with probabilistic remasking}

The diffusion process can be viewed as producing an extended sequence ${\hat{\bm{z}} = \{\bm{z}^0, \hat{\bm{z}}^1, \mathcal{U}^1, \bm{z}^1, \ldots, \bm{z}^T\}}$ over a distribution:
\begin{equation}
    \pi_{\bm{\theta}}(\hat{\bm{z}}|\bm{c}) = \prod_{t=1}^T \pi(\hat{\bm{z}}^t | \bm{z}^{t-1}, \bm{c}) p_\text{unmask}(\mathcal{U}_t | \hat{\bm{z}}^t, \bm{z}^{t-1}, \bm{c}) p_\text{remask}(\bm{z}^t | \mathcal{U}_t, \hat{\bm{z}}^t, \bm{z}^{t-1}, \bm{c}).
\label{eq:pos_val_factorization}
\end{equation}
This interpretation extends the trajectory-based view of diffusion in \citet{d2}, and further considers the remasking process as part of the generation.
It leads to a similarly simple and direct policy gradient estimate over extended trajectories $\hat{\bm{z}}$:
\begin{equation}
\nabla_{\theta} J(\bm{\theta}) \!=\!\!\!\!\! \E_{\hat{\bm{z}} \sim \pi_{\bm{\theta}}(\cdot \mid \bm{c})}\!\! \bigg[\! R(\bm{c}, \vecz^T)\!\sum_{t=1}^{T}\!\Big( \underbrace{ \nabla_{\theta} \log p_\text{unmask}(\mathcal{U}_t | \hat{\bm{z}}^t, \bm{z}^{t-1}, \bm{c}) }_{\text{unmasking gradient}} + \underbrace{ \nabla_{\theta} \log \pi_\theta(\hat\vecz^t \!\mid\! \vecc, \vecz^{t-1} ) }_{\text{token gradient}} \Big)\!\bigg].
\label{eq:pg_decomp}
\end{equation}
Here, the remasking is deterministic and does not depend on $\bm{\theta}$ and thus falls out of the gradient estimate.
Both terms in the decomposition (Eq.~\ref{eq:pg_decomp}) are functions of the same policy $\pi_{\bm{\theta}}$.
Therefore, we can optimize them jointly with any standard policy gradient algorithm.
We use GSPO~\citep{gspo} and find it most effective.


Finally, we note that the unmasking gradient is necessary for a correct gradient estimate: optimizing only the token component, as in \citet{d2}, can miss directions that improve the objective $J(\vectheta)$.
Intuitively, changes to the unmasking order can affect the final reward even when they leave the token probabilities unchanged, so a token-only gradient provides no signal along such directions.
We make this precise in Appendix~\ref{appendix:suboptim} with a minimal counterexample: a finite-horizon MDLM problem with a linear policy in which the token-only gradient is identically zero along a direction $\bm{v}$ that strictly increases the true objective, while the full position-aware gradient is not.

\section{Experiments}
\label{sec:experiments}

\begin{table}[t!]
\centering
\adjustbox{width=\textwidth,center}{
\begin{tabular}{lcccccccccccc}
\toprule
& \multicolumn{3}{c}{\textbf{GSM8K}} & \multicolumn{3}{c}{\textbf{MATH500}} & \multicolumn{3}{c}{\textbf{HumanEval}} & \multicolumn{3}{c}{\textbf{MBPP}} \\
\cmidrule(lr){2-4} \cmidrule(lr){5-7} \cmidrule(lr){8-10} \cmidrule(lr){11-13}
\textbf{Model / Seq Len} & \textbf{128} & \textbf{256} & \textbf{512} & \textbf{128} & \textbf{256} & \textbf{512} & \textbf{128} & \textbf{256} & \textbf{512} & \textbf{128} & \textbf{256} & \textbf{512} \\
\midrule
LLaDA-8B-Inst. & 69.5 & 77.2 & 79.8 & 28.2 & 32.4 & 34.6 & 27.4 & 35.5 & 37.8 & 36.2 & 41.2 & 40.4  \\
LLaDA-1.5     & 70.4 & 80.5 & 81.9 & 26.8 & 32.2 & 35.8 & --- & --- & --- & --- & --- & --- \\
d1            & 72.2 & 80.6 & 81.3 & 31.4 & 36.0 & 39.4 & 29.3 & 39.0 & 34.8 & 42.0 & 45.5 & 41.6 \\
wd1           & 74.6 & 81.5 & 83.0 & 31.0 & 37.4 & 39.0 & --- & --- & --- & --- & --- & --- \\
UniGRPO       & 74.9 & 82.5 & 82.7 & 32.4 & 37.4 & 39.4 & --- & --- & --- & --- & --- & --- \\
GDPO          & 78.4 & 82.8 & \underline{85.0} & 33.2 & 39.6 & 41.4 & 26.2 & 39.6 & 39.0 & 43.6 & \underline{50.6} & 47.1 \\
SPG w/ Mixture & \underline{78.5} & \underline{83.9}$^\dagger$ & 84.5 & \underline{33.4} & \underline{40.0} & \underline{41.8} & \underline{31.0} & \underline{40.6} & \underline{41.1} & 44.3 & \underline{50.6} & 50.8 \\
StepMerge     & 78.0 & 83.3 & 84.8 & 33.1 & 39.1 & 41.2 & 30.5 & 39.9 & 40.7 & \underline{44.9} & 50.1 & \underline{50.9} \\
\midrule
Ours          & \makecell{\textbf{81.0}\\\textcolor{ForestGreen}{\small\textbf{+2.5}}} & \makecell{\textbf{85.9}\\\textcolor{ForestGreen}{\small\textbf{+2.0}}} & \makecell{\textbf{87.1}\\\textcolor{ForestGreen}{\small\textbf{+2.1}}} & \makecell{\textbf{37.4}\\\textcolor{ForestGreen}{\small\textbf{+4.0}}} & \makecell{\textbf{42.2}\\\textcolor{ForestGreen}{\small\textbf{+2.2}}} & \makecell{\textbf{44.2}\\\textcolor{ForestGreen}{\small\textbf{+2.4}}} & \makecell{\textbf{33.2}\\\textcolor{ForestGreen}{\small\textbf{+2.2}}} & \makecell{\textbf{43.1}\\\textcolor{ForestGreen}{\small\textbf{+2.5}}} & \makecell{\textbf{44.0}\\\textcolor{ForestGreen}{\small\textbf{+2.9}}} & \makecell{\textbf{47.1}\\\textcolor{ForestGreen}{\small\textbf{+2.2}}} & \makecell{\textbf{52.8}\\\textcolor{ForestGreen}{\small\textbf{+2.2}}} & \makecell{\textbf{53.4}\\\textcolor{ForestGreen}{\small\textbf{+2.5}}} \\
\bottomrule
\end{tabular}
}
\caption{Comparison of RL methods for diffusion language models on mathematical reasoning and code generation benchmarks. All methods use LLaDA-8B-Instruct as the base model. Results reported at generation lengths 128, 256, and 512. Best results are in \textbf{bold}, second best are \underline{underlined}. Dashes indicate results not reported in the original work. $^\dagger$Obtained after running publicly available code; the originally reported number was 86.1.}
\label{tab:main_results}
\end{table}

Following prior work~\citep{d1, gdpo, spg, wd1}, we use LLaDA-8B-Instruct~\citep{llada} as the base model for RL fine-tuning. We employ Low-Rank Adaptation (LoRA) with rank $r=128$, scaling factor $\alpha=64$, and dropout $0.05$, targeting all attention and MLP projection layers. We use 4-bit quantization (NF4) and Flash Attention 2 for memory efficiency, with all training in bfloat16.

\paragraph{Position log-prob computation.} We estimate both token and position log-probabilities using the StepMerge approximation~\citep{d2}, which groups $T$ denoising steps into $N$ contiguous segments and treats all tokens unmasked within a segment as if they were unmasked at the same step. See Appendix~\ref{appendix:stepmerge_approx} for more details on the method and the theoretical analysis of approximation error induced by using StepMerge to evaluate the trajectory
log-probability. We subsample $K$ boundaries for evaluation. The unmasking score at each masked position is the maximum logit from the model's output. Since we compute both log-probability terms from the same forward passes, the position term adds no overhead. For our main results, we set $N=32$ and $K=12$ (see Appendix~\ref{appendix:ablations}). We optimize with GSPO~\citep{gspo}, clipping the token and position importance ratios separately.

\paragraph{Datasets and evaluation.} We evaluate on GSM8K~\citep{gsm8k} and MATH500~\citep{math500} for mathematical reasoning, and HumanEval~\citep{humaneval} and MBPP~\citep{mbpp} for code generation. For math, we follow the train-test splitting, reward functions, and evaluation protocol of d1~\citep{d1}, wd1~\citep{wd1}, and SPG~\citep{spg}. For code, we follow GDPO~\citep{gdpo}, using KodCode-Light-RL-10K~\citep{kodcode} as the training dataset. All evaluations are zero-shot. For RL rollouts, we use a sequence length of 256 tokens, 128 diffusion steps, and block size 32. We report accuracy at generation lengths 128, 256, and 512.

\paragraph{Baselines.} We compare against LLaDA-8B-Instruct~\citep{llada} (base model), LLaDA-1.5~\citep{llada1.5}, d1~\citep{d1}, wd1~\citep{wd1}, UniGRPO~\citep{mmada}, GDPO~\citep{gdpo}, SPG w/ Mixture~\citep{spg}, and StepMerge~\citep{d2}. Math results for prior methods are taken from their original papers. SPG does not report code generation results, so we evaluate their official code on HumanEval and MBPP ourselves. For StepMerge, we reimplement the algorithm with the same $K$ subsampling as our method, making it a direct ablation that differs only in the absence of the masking term.

\subsection{Experimental Results}
\label{sec:results}

\paragraph{Main results.} Table~\ref{tab:main_results} presents our main results across four benchmarks. Our method achieves the best results in every configuration. The comparison against StepMerge is particularly informative: both methods share the same trajectory-based likelihood framework and $K$ subsampling, differing only in the masking log-probability term. The consistent 2-4 point improvement across all configurations demonstrates that the masking gradient provides a meaningful training signal beyond what token-level optimization alone captures. Against evidence bound methods such as GDPO and SPG, our method also shows consistent gains, particularly on MATH500 (+4.0 at length 128, +2.4 at length 512) and code generation benchmarks. We further show that our method generalizes to a second base model, Dream-7B (Appendix~\ref{appendix:dream}), and to two planning tasks, Sudoku and Countdown (Appendix~\ref{appendix:planning}).


\paragraph{Training efficiency.}
We compare the training efficiency of our method against SPG~\citep{spg}, the strongest ELBO-based baseline in our setting, on GSM8K. Both methods run on the same hardware (8 H100 GPUs) from the same base model, with identical generation, optimization, and evaluation configurations; the only difference is the log-probability estimator. 

We compare the wall-clock time to reach fixed accuracy targets both for our method and SPG in Table~\ref{tab:eff_speedup}. Our method reaches $74\%$ accuracy $1.9\times$ faster than SPG and reaches SPG's final accuracy in roughly 15 hours, compared with 18 hours for SPG. It also reaches $79\%$ and $80\%$ accuracy, which SPG does not reach. Table~\ref{tab:eff_summary} gives the efficiency summary: our per-step throughput is lower because the trajectory-based estimator requires more forward passes per optimizer step, but the position-aware gradient converges in fewer steps and reaches a higher final accuracy (81.0 vs.\ 78.5). Figure~\ref{fig:efficiency} plots accuracy against wall-clock time: our method starts marginally behind SPG, overtakes it within the first few hours, and the gap widens as training proceeds. Across three random seeds our final accuracy varies by at most $0.2$ points, so these differences are well outside run-to-run variation.

\begin{table}[t!]
\centering
\begin{subtable}[t]{0.44\textwidth}
\centering
\begin{tabular}{lcc}
\toprule
 & \textbf{SPG} & \textbf{Ours} \\
\midrule
Throughput (steps/h) & \textbf{360} & 290 \\
Convergence (steps)  & 6500 & \textbf{6000} \\
Final accuracy (\%)  & 78.5 & \textbf{81.0} \\
Match SPG-final (h)  & 18 & \textbf{15} \\
\bottomrule
\end{tabular}
\caption{Efficiency summary.}
\label{tab:eff_summary}
\end{subtable}
\hfill
\begin{subtable}[t]{0.52\textwidth}
\centering
\begin{tabular}{lccc}
\toprule
\textbf{Target} & \textbf{SPG (h)} & \textbf{Ours (h)} & \textbf{Speedup} \\
\midrule
74\% & 8.6  & 4.5  & 1.9$\times$ \\
75\% & 9.9  & 5.8  & 1.7$\times$ \\
76\% & 13.4 & 10.0 & 1.3$\times$ \\
77\% & 15.9 & 11.8 & 1.3$\times$ \\
78\% & 17.2 & 14.0 & 1.2$\times$ \\
79\% & ---  & 16.5 & --- \\
80\% & ---  & 18.3 & --- \\
\bottomrule
\end{tabular}
\caption{Wall-clock hours to reach a target accuracy.}
\label{tab:eff_speedup}
\end{subtable}
\caption{Training efficiency on GSM8K (8 H100 GPUs, generation length 128). (a) Throughput, convergence, and final accuracy, with the better value per row in \textbf{bold}. (b) Wall-clock hours to reach fixed accuracy targets, with speedup over SPG; ``---'' denotes a target SPG never reaches.}
\label{tab:efficiency}
\end{table}

\begin{figure}[t]
\centering
\includegraphics[width=0.66\linewidth]{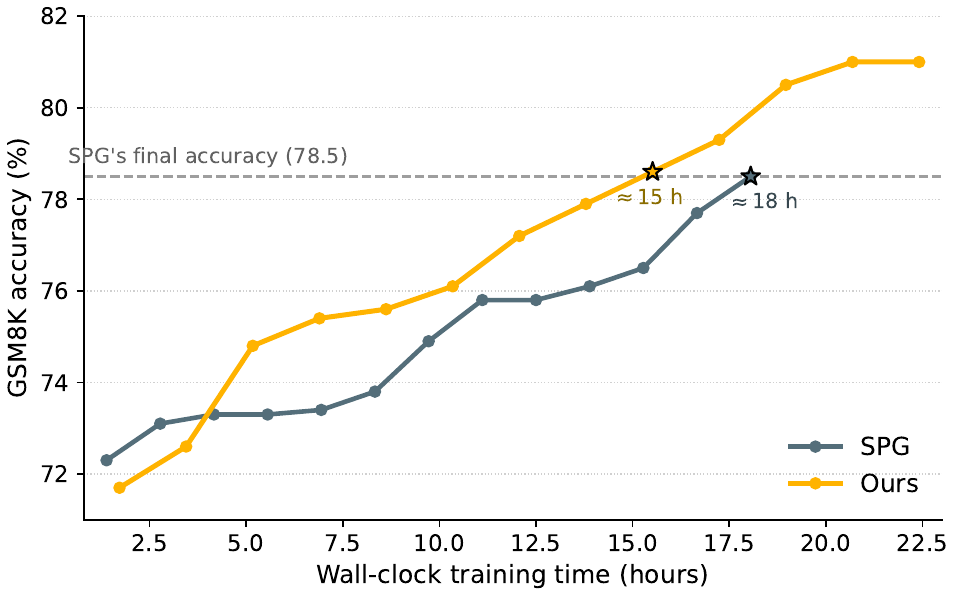}
\caption{Wall-clock training efficiency on GSM8K (8 H100 GPUs, generation length 128). Test accuracy versus wall-clock training time for our method and SPG~\citep{spg}. Although our method runs fewer steps per hour, it overtakes SPG within the first few hours and converges to a higher accuracy (81.0 vs.\ 78.5). Stars mark where each method first reaches SPG's final accuracy: $\approx$15 hours for ours versus $\approx$18 hours for SPG.}
\label{fig:efficiency}
\end{figure}

\paragraph{Inference block size.}
Figure~\ref{fig:blocksize_ablation} compares inference block sizes (32, 64, and full sequence) at generation length 256 with confidence-based unmasking.
We compare against SPG~\citep{spg}, a strong evidence-bound baseline, and StepMerge~\citep{d2}, which shares the same trajectory-based framework but without the position term.
All methods are trained with block size 32.
Our method outperforms both baselines at every block size, and the gap widens as block size increases.
Against SPG, the margin grows from +2.0 to +4.3 on GSM8K and from +2.2 to +3.5 on MATH500 (block 32 to full sequence).
Against StepMerge, the margin grows from +2.6 to +4.4 on GSM8K and from +3.1 to +3.7 on MATH500.
We attribute this to the growing importance of position selection as the diffusion window expands: larger blocks present more masked positions to choose from at each step, increasing the variability in unmasking order and amplifying the benefit of mask-aware optimization.
Full numerical results are provided in Table~\ref{tab:blocksize_ablation} in the appendix.
We provide further ablations on the RL algorithm, the StepMerge configuration, the sampling temperatures, and the unmasking-set size $|\mathcal{U}_t|$ in Appendix~\ref{appendix:ablations}.

\begin{figure}[t!]
    \centering
    \includegraphics[width=1.0\linewidth]{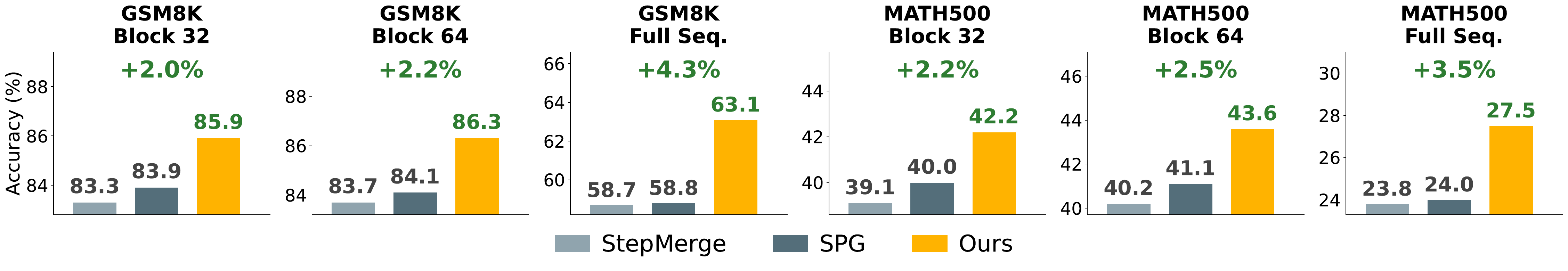}
    \caption{Ablation over inference block size. All methods use confidence-based unmasking at generation length 256. Results for baselines are from SPG~\citep{spg}.}
    \label{fig:blocksize_ablation}
\end{figure}

\section{Related work}
\label{sec:related_work}

\paragraph{Diffusion Language Models.}
Discrete diffusion models for text generation have developed along several lines.
Early work applied diffusion in continuous embedding space~\citep{diffusionlm, diffuseq}, while more recent approaches operate directly in discrete token space.
Score Entropy Discrete Diffusion (SEDD)~\citep{sedd} and Masked Diffusion Language Models (MDLM)~\citep{mdlm, shi2024simplified, ou2024absorbing} established masked diffusion as a competitive paradigm for text generation, with simplified training objectives based on the ELBO.
Scaling these models to larger sizes, LLaDA~\citep{llada} and Dream~\citep{dream} demonstrated that masked diffusion LLMs can approach autoregressive performance on standard benchmarks.
More recently, diffusion language models have been extended to multimodal settings~\citep{mmada, lavida, fudoki, unidisc}, and several works have improved inference efficiency through caching~\citep{dkvcache, dllmcache} and parallel decoding strategies~\citep{fastdllm, block_diffusion}.

\paragraph{Reinforcement Learning for Language Models.}
Reinforcement learning has become a central technique for post-training large language models, building on classical policy gradient methods~\citep{reinforce, policy_gradient}.
Early work used RL from human feedback (RLHF) with PPO~\citep{ppo} and TRPO~\citep{trpo} to align models with human preferences~\citep{christiano2017deep, rlhf}.
More recently, RL has been applied to improve reasoning capabilities.
DeepSeekMath~\citep{shao2024deepseekmath} introduced Group Relative Policy Optimization (GRPO), a critic-free policy gradient method that estimates baselines from group rollouts, removing the need for a value network.
DeepSeek-R1~\citep{deepseekr1} and Kimi K1.5~\citep{team2025kimi} scaled RL-based reasoning training, demonstrating that policy gradient methods with verifiable rewards can elicit strong mathematical and coding performance.
Subsequent work has refined the GRPO objective, reducing bias~\citep{drgrpo}, improving sample efficiency~\citep{spo}, and introducing sequence-level clipping~\citep{gspo}.

\paragraph{Reinforcement Learning for Diffusion Language Models.}
Reinforcement learning for MDLMs has been used for reward optimization~\citep{drakes}, preference alignment~\citep{llada1.5}, faster parallel decoding~\citep{yang2025taming, jazbec2025learning, chen2025dultra, zhao2025diffpo}, and reasoning~\citep{d1, gdpo, spg, d2}.
For reasoning, the central challenge of RL is estimating the intractable sequence log-likelihood.
ELBO-based methods estimate log-probabilities via variants of the training objective: d1~\citep{d1} uses a mean-field approximation, UniGRPO~\citep{mmada} and ESPO~\citep{ou2025espo} use sequence-level Monte Carlo estimators, GDPO~\citep{gdpo} tightens the bound with Gaussian quadrature, wd1~\citep{tang2025wd1} uses weighted optimization, and SPG~\citep{spg} combines ELBO and EUBO bounds with block-wise masking.
Trajectory-based methods instead decompose the log-likelihood over denoising steps, attempting to follow the generation trajectory: d2~\citep{d2} and \citet{wang2025revolutionizing} partition the trajectory into segments via the StepMerge estimator.
However, these trajectory-based methods model the trajectory as a sequence of token prediction steps only, and in doing so ignore the position selection decision that MDLMs make at generation time.
Our work addresses this gap.

\paragraph{Learning the unmasking order.}
A line of work aims to learn or optimize the unmasking order of MDLMs, frequently to accelerate parallel decoding by reducing the number of forward passes.
Jazbec et al.~\citep{jazbec2025learning} learn an unmasking policy over the model's confidences, dUltra~\citep{chen2025dultra} trains an unmasking planner head with reinforcement learning, and P2~\citep{peng2025path} frames sampling as path planning to choose which positions to update; Where-to-Unmask~\citep{asano2026unmask} instead trains a separate planner in a supervised fashion to imitate an oracle order derived from ground-truth tokens.
Unlike these methods, our goal is not to introduce an explicit model of the unmasking order; rather, we show that a position-selection term arises directly from the complete policy-gradient estimator, and we optimize it using the model's own logits without any additional module.
The most closely related method in this respect is LLaDOU/DCoLT~\citep{dcolt}, which also incorporates position selection into the RL policy through a Plackett--Luce distribution; unlike DCoLT, which trains a separate module to model the position logits, we reuse the base model's own logits.
We provide a detailed comparison with DCoLT in Appendix~\ref{appendix:lladou}.

\section{Conclusion}
\label{sec:conclusion}

We propose a policy gradient method for masked diffusion language models that optimizes both token prediction and position selection.
By replacing greedy top-$K$ remasking with a probabilistic variant, we obtain a differentiable distribution over unmasking positions, yielding a gradient decomposition into a token term and a masking term.
Extensive experiments across mathematical reasoning and code generation benchmarks show that jointly optimizing these terms in the policy gradient achieves significant improvements over existing methods and state-of-the-art results.

\section*{Acknowledgments}
The experiments in this work were run on the Vista GPU Cluster through the Center for Generative AI (CGAI) and the Texas Advanced Computing Center (TACC) at The University of Texas at Austin.

\bibliography{colm2026_conference}
\bibliographystyle{colm2026_conference}

\appendix

\section{Training details}
\label{appendix:training_details}

\paragraph{Base model.}
We use LLaDA-8B-Instruct~\citep{llada} as the base model for all experiments.
LLaDA is a masked diffusion language model that uses absorbing-state diffusion with a special \texttt{[MASK]} token (ID 126336).
We do not apply supervised fine-tuning before RL training; all results use the pretrained instruction-tuned checkpoint directly.

\paragraph{Parameter-efficient fine-tuning.}
We use Low-Rank Adaptation (LoRA) for parameter-efficient fine-tuning.
We set the LoRA rank to $r=128$, scaling factor $\alpha=64$, and dropout $0.05$.
LoRA adapters are applied to all attention projection layers (query, key, value, and output) and all MLP projection layers (gate, up, and down).
The base model weights are frozen in 4-bit quantization (NF4 format) to reduce memory usage, with all LoRA parameters and computations in bfloat16.
We use Flash Attention 2~\citep{dao2023flashattention} for memory-efficient attention computation.

\paragraph{Optimizer.}
We use AdamW~\citep{adamw} with $\beta_1 = 0.9$, $\beta_2 = 0.99$, and weight decay $0.1$.
The learning rate is set to $3 \times 10^{-6}$ with a constant schedule and a brief warmup (warmup ratio $0.0001$).
We clip gradients to a maximum norm of $0.2$.

\paragraph{Batch size and gradient accumulation.}
We use a per-device training batch size of 3 with gradient accumulation over 2 steps.
Training is distributed across 8 GPUs, yielding an effective batch size of $3 \times 2 \times 8 = 48$ prompt-completion pairs per optimization step.
For each prompt, we sample $G=6$ completions for group-relative advantage estimation.

\paragraph{Policy optimization.}
We optimize using GSPO~\citep{gspo} with separate clipping for the token and position importance ratios.
GSPO clips the importance ratio with a lower bound of $3 \times 10^{-4}$ and an upper bound of $4 \times 10^{-4}$.
No KL regularization term is used.
We perform $\mu = 12$ inner gradient updates per batch of generated completions.
We synchronize the reference model every 64 optimization steps.

\paragraph{Generation parameters.}
During RL rollouts, we generate completions with a maximum length of 256 tokens.
Generation uses semi-autoregressive block-wise decoding with a block size of 32 tokens and 128 diffusion steps per block.
At each denoising step, we use probabilistic remasking.
The sampling temperature is set to 0.9 for token selection and 0.5 for remasking.

\paragraph{Datasets.}
For mathematical reasoning, we train on the training split of GSM8K~\citep{gsm8k} for grade-school math and on the training split of the MATH dataset for competition-level math, evaluating on the MATH500~\citep{math500} test set.
We follow the same train-test splitting as d1~\citep{d1} and SPG~\citep{spg}.
For code generation, we train on KodCode-Light-RL-10K~\citep{kodcode}, a dataset of coding problems at varying difficulty levels with synthetic unit tests for reward computation.
We evaluate on HumanEval~\citep{humaneval} (164 hand-crafted Python problems) and sanitized MBPP~\citep{mbpp} (257 crowd-sourced Python tasks).
We follow the same dataset setup as GDPO~\citep{gdpo}.

\paragraph{Reward functions.}
We follow the reward functions of d1~\citep{d1} and SPG~\citep{spg} for mathematical reasoning, and GDPO~\citep{gdpo} for code generation.

\subparagraph{GSM8K.} We use a composite reward with five additive components:
\begin{itemize}
    \item XML Structure Reward: $+0.125$ per correctly placed opening or closing tag.
    \item Soft Format Reward: $+0.5$ if the output matches the pattern \texttt{<reasoning>...</reasoning><answer>...</answer>}.
    \item Strict Format Reward: $+0.5$ for exact formatting with correct line breaks.
    \item Integer Answer Reward: $+0.5$ if the extracted answer is a valid integer.
    \item Correctness Reward: $+2.0$ if the extracted answer matches the ground truth.
\end{itemize}

\subparagraph{MATH500.} We use a two-component reward:
\begin{itemize}
    \item Format Reward: $+1.0$ if answer tags are present with \texttt{\textbackslash boxed\{\}} inside; $+0.75$ if answer tags are present without \texttt{\textbackslash boxed\{\}}; $+0.5$ if \texttt{\textbackslash boxed\{\}} is present without answer tags; $+0.25$ otherwise.
    \item Correctness Reward: $+2.0$ if the answer in \texttt{\textbackslash boxed\{\}} matches the ground truth.
\end{itemize}

\subparagraph{Code generation (HumanEval, MBPP).} We use a two-component reward:
\begin{itemize}
    \item Format Reward: $+1.0$ if the output contains a valid Python code block with no syntax errors; $+0.5$ if well-formatted but with syntax errors; $+0.0$ otherwise.
    \item Code Execution Reward: the fraction of unit tests passed. We block unsafe modules (os, sys, shutil, subprocess, socket, psutil, ctypes, pathlib, builtins, \_\_import\_\_) and assign a reward of $0$ if any are used.
\end{itemize}

\paragraph{Training schedule.}
We train each dataset independently.
For GSM8K, we train for approximately 8,000 optimization steps.
For MATH500, we train for approximately 8,000 steps.
For code generation (KodCode), we train for approximately 6,000 steps.
We save checkpoints every 100 steps and evaluate at each checkpoint.

All experiments are conducted on 8 NVIDIA H100-80GB GPUs with DeepSpeed ZeRO-2.



\begin{figure}[h]
\centering
\begin{subfigure}[t]{0.32\textwidth}
    \centering
    \includegraphics[width=\linewidth]{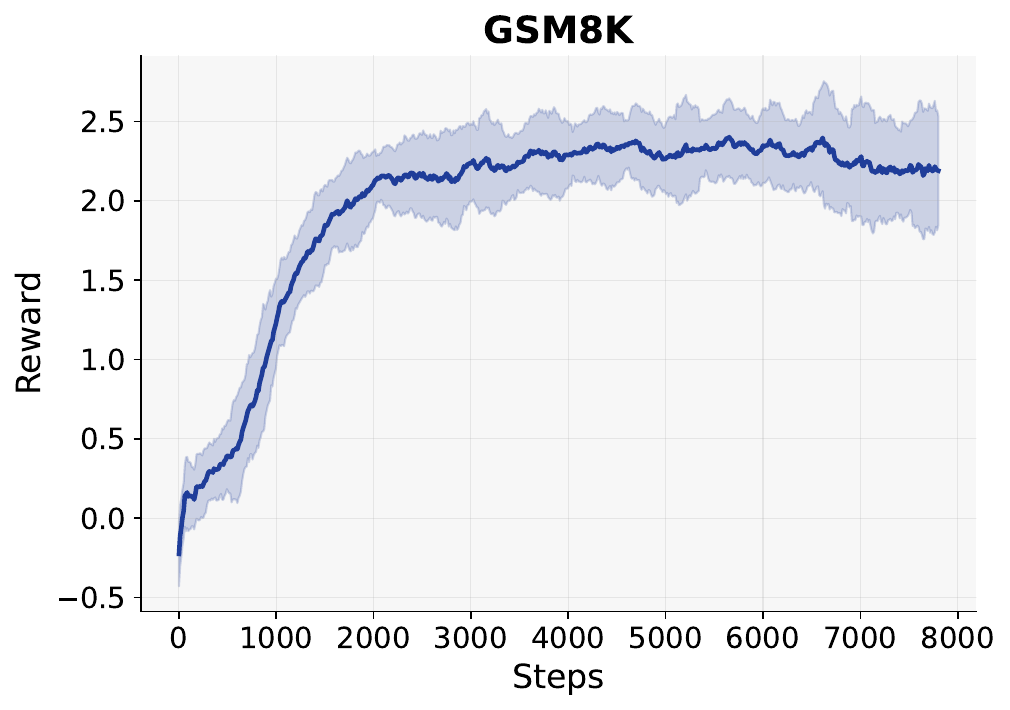}
\end{subfigure}
\hfill
\begin{subfigure}[t]{0.32\textwidth}
    \centering
    \includegraphics[width=\linewidth]{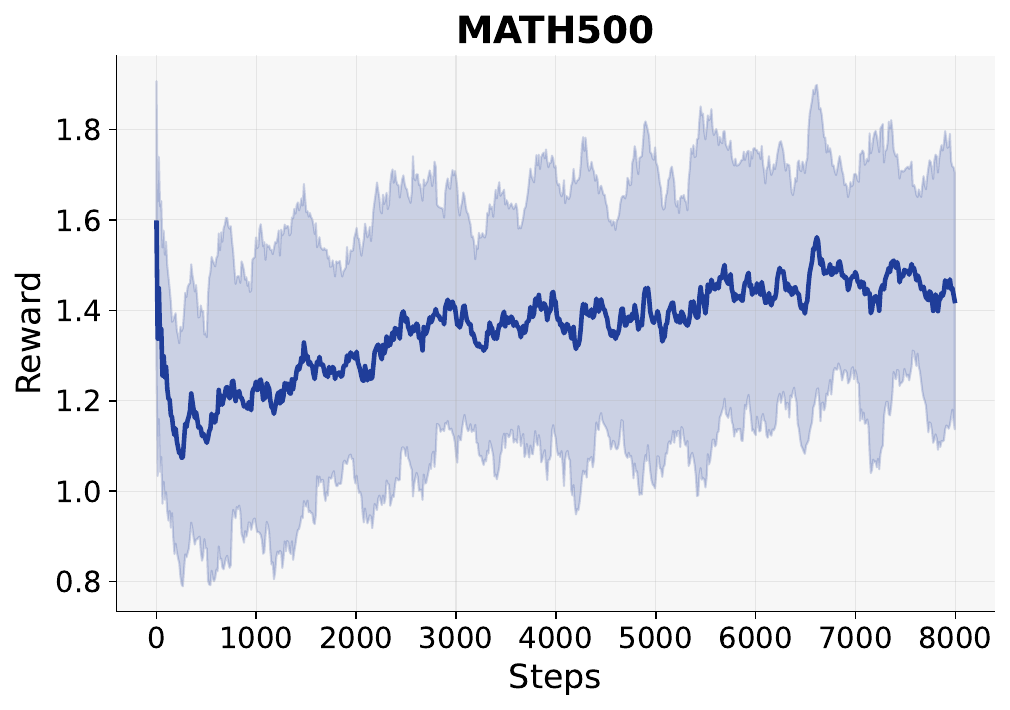}
\end{subfigure}
\hfill
\begin{subfigure}[t]{0.32\textwidth}
    \centering
    \includegraphics[width=\linewidth]{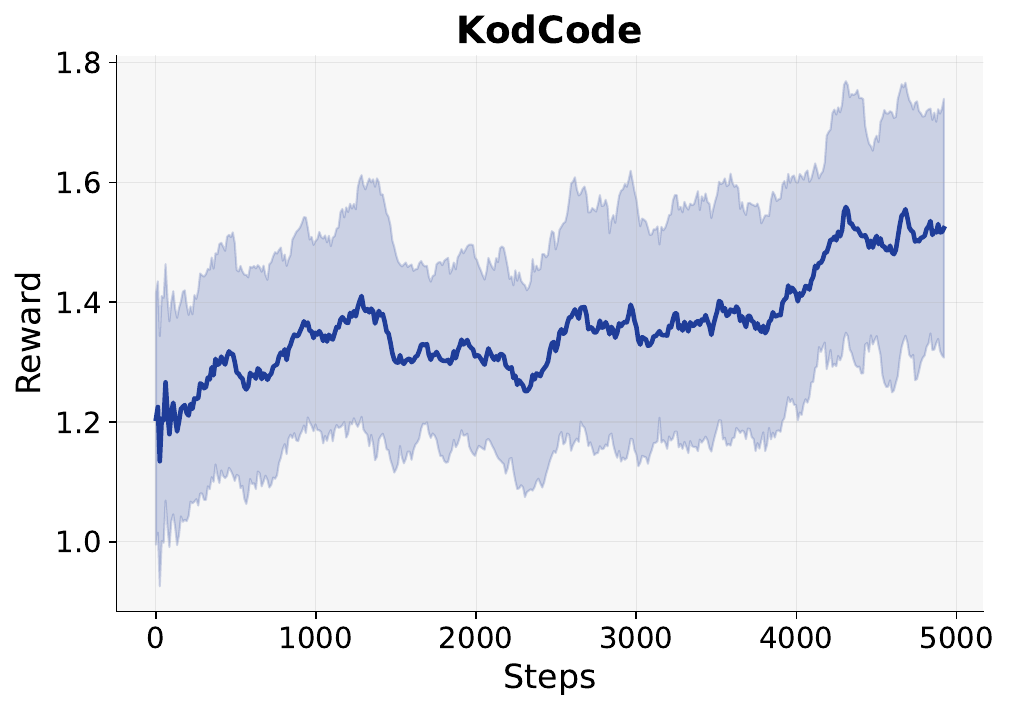}
\end{subfigure}
\caption{Training reward dynamics for our method on GSM8K, MATH500, and KodCode. Curves show EMA-smoothed reward with one standard deviation band.}
\label{fig:reward_curves}
\end{figure}

\section{Results on planning tasks}
\label{appendix:planning}

We additionally evaluate on two planning benchmarks, Sudoku~\citep{sudoku} and Countdown~\citep{countdown}, following the evaluation protocol of SPG~\citep{spg} and d1~\citep{d1}. Following SPG, Sudoku uses 3 few-shot examples during rollouts and evaluation. Table~\ref{tab:planning_results} reports accuracy at generation lengths 128, 256, and 512; results for prior methods are taken from SPG~\citep{spg}. Our method achieves the best results on both tasks at every generation length, outperforming the strongest baseline, SPG, with the largest gain on Sudoku (+3.1 at length 512). This is consistent with our finding that the benefit of position-aware optimization grows as the number of masked positions increases.

\begin{table}[H]
\centering
\begin{tabular}{lcccccc}
\toprule
& \multicolumn{3}{c}{\textbf{Sudoku}} & \multicolumn{3}{c}{\textbf{Countdown}} \\
\cmidrule(lr){2-4} \cmidrule(lr){5-7}
\textbf{Model / Seq Len} & \textbf{128} & \textbf{256} & \textbf{512} & \textbf{128} & \textbf{256} & \textbf{512} \\
\midrule
LLaDA-8B-Inst. & 5.7  & 27.7 & 26.2 & 18.8 & 16.8 & 16.8 \\
LLaDA-1.5      & 7.4  & 26.9 & 29.0 & 21.9 & 21.1 & 21.5 \\
d1             & 7.2  & 32.5 & 29.3 & 30.9 & 30.9 & 34.4 \\
wd1            & 33.1 & 32.1 & 22.5 & 48.8 & 52.3 & 50.8 \\
UniGRPO        & 59.0 & 67.0 & 62.9 & 44.5 & 43.0 & 57.0 \\
SPG w/ Mixture & \underline{82.9} & \underline{94.0} & \underline{93.1} & \underline{68.8} & \underline{70.7} & \underline{70.3} \\
\midrule
Ours           & \textbf{84.5} & \textbf{94.9} & \textbf{96.2} & \textbf{69.3} & \textbf{71.4} & \textbf{71.0} \\
\bottomrule
\end{tabular}
\caption{Results on two planning benchmarks, Sudoku and Countdown, using LLaDA-8B-Instruct as the base model. Results for prior methods are taken from SPG~\citep{spg}; following SPG, Sudoku uses 3 few-shot examples during rollouts. Best results are in \textbf{bold}, second best are \underline{underlined}.}
\label{tab:planning_results}
\end{table}

\section{Generalization to Dream-7B}
\label{appendix:dream}

We additionally evaluate our method on a different base model, Dream-7B~\citep{dream}. Table~\ref{tab:dream} reports GSM8K, MATH500, and Sudoku accuracy at generation lengths 128, 256, and 512; SPG results are obtained by running their publicly available code. Our method outperforms all baselines across all three benchmarks and generation lengths, with improvements of up to $2.9$ points over the strongest baseline.

\begin{table}[H]
\centering
\adjustbox{width=\textwidth,center}{
\begin{tabular}{lccccccccc}
\toprule
& \multicolumn{3}{c}{\textbf{GSM8K}} & \multicolumn{3}{c}{\textbf{MATH500}} & \multicolumn{3}{c}{\textbf{Sudoku}} \\
\cmidrule(lr){2-4} \cmidrule(lr){5-7} \cmidrule(lr){8-10}
\textbf{Model / Seq Len} & \textbf{128} & \textbf{256} & \textbf{512} & \textbf{128} & \textbf{256} & \textbf{512} & \textbf{128} & \textbf{256} & \textbf{512} \\
\midrule
Dream-7B       & 75.8 & 81.3 & 80.7 & 38.2 & 45.7 & 48.0 & 9.3 & 2.1 & 14.0 \\
d1             & 77.0 & 81.9 & 81.7 & 39.4 & 46.9 & 48.9 & 64.4 & 69.7 & 51.1 \\
wd1            & 76.3 & 82.4 & 82.9 & 39.5 & 47.4 & 50.5 & 29.5 & 39.2 & 30.3 \\
ESPO           & 79.6 & 82.3 & 82.0 & 40.3 & 47.4 & 50.3 & 71.7 & 72.3 & 71.3 \\
SPG w/ Mixture & \underline{80.1} & \underline{82.6} & \underline{83.5} & \underline{41.1} & \underline{48.5} & \underline{50.7} & \underline{72.1} & \underline{72.4} & \underline{71.9} \\
\midrule
Ours           & \textbf{82.5} & \textbf{84.1} & \textbf{86.4} & \textbf{43.8} & \textbf{50.3} & \textbf{52.9} & \textbf{73.4} & \textbf{74.1} & \textbf{74.3} \\
\bottomrule
\end{tabular}
}
\caption{Generalization to Dream-7B~\citep{dream} on GSM8K, MATH500, and Sudoku, at generation lengths 128, 256, and 512. SPG results are obtained by running their publicly available code. Best results in \textbf{bold}, second best \underline{underlined}.}
\label{tab:dream}
\end{table}

\section{Comparison to LLaDOU}
\label{appendix:lladou}

LLaDOU~\citep{dcolt}, trained with the Diffusion Chain of Lateral Thought (DCoLT) algorithm, shares our high-level insight that the unmasking order should be part of the policy, and arrives at a similar probabilistic framework based on Plackett--Luce position sampling. The key difference is architectural: LLaDOU introduces a separately trained head for position selection, whereas our method derives position scores directly from the model's own token log-likelihoods, requiring no additional parameters or architectural changes. This has the practical advantage that our approach integrates directly with existing MDLM training infrastructure such as StepMerge~\citep{d2}, and applies to any off-the-shelf MDLM.

We compare the two approaches on GSM8K in Table~\ref{tab:lladou}, reporting accuracy, training throughput, and total training cost. When DCoLT uses full trajectory back-propagation (full replay), it reaches accuracy comparable to ours but at roughly $6\times$ lower throughput and $5\times$ higher total GPU cost. In a matched comparison where both methods use the StepMerge approximation, our method outperforms DCoLT by approximately $5\%$ on average across generation lengths. This efficiency gain does not come solely from StepMerge: even with StepMerge, DCoLT still requires about $5\times$ more GPU hours to train ($\sim$800 vs $\sim$160), because it must train a separate position-selection head from scratch. Our method instead derives position scores from the model's existing token log-likelihoods, providing a useful signal from the start of training and converging much faster. Overall, our method matches or exceeds DCoLT's accuracy while being substantially cheaper to train and applicable to any MDLM without architectural changes.

\begin{table}[H]
\centering
\begin{tabular}{lccccc}
\toprule
\textbf{Method} & \textbf{128} & \textbf{256} & \textbf{512} & \textbf{Throughput (steps/h)} & \textbf{GPU hours} \\
\midrule
DCoLT (full replay)  & 80.4 & 84.3 & 86.2 & 48  & $\sim$800 \\
DCoLT (w/ StepMerge) & 77.1 & 80.1 & 82.3 & 280 & $\sim$800 \\
\midrule
Ours                 & \textbf{81.0} & \textbf{85.9} & \textbf{87.1} & \textbf{290} & \textbf{$\sim$160} \\
\bottomrule
\end{tabular}
\caption{Comparison with LLaDOU (trained with DCoLT) on GSM8K. We report accuracy at generation lengths 128, 256, and 512, training throughput, and total training cost (number of GPUs $\times$ hours). Our method matches or exceeds LLaDOU's accuracy at a fraction of the training cost. Best accuracy in \textbf{bold}.}
\label{tab:lladou}
\end{table}

\section{Additional ablations}
\label{appendix:ablations}

\subsection{Inference block size}
\label{appendix:blocksize_ablation}

Table~\ref{tab:blocksize_ablation} reports the full numerical results for the inference block size ablation discussed in Section~\ref{sec:results} and summarized in Figure~\ref{fig:blocksize_ablation}.

\begin{table}[h]
\centering
\adjustbox{width=\textwidth,center}{
\begin{tabular}{lcccccc}
\toprule
& \multicolumn{2}{c}{\textbf{Block 32}} & \multicolumn{2}{c}{\textbf{Block 64}} & \multicolumn{2}{c}{\textbf{Full Sequence}} \\
\cmidrule(lr){2-3} \cmidrule(lr){4-5} \cmidrule(lr){6-7}
\textbf{Model} & \textbf{GSM8K} & \textbf{MATH500} & \textbf{GSM8K} & \textbf{MATH500} & \textbf{GSM8K} & \textbf{MATH500} \\
\midrule
LLaDA-8B-Inst. & 77.2 & 32.4 & 78.6 & 33.2 & 23.9 & 17.8 \\
LLaDA-1.5     & 80.5 & 32.2 & 81.0 & 35.4 & 41.4 & 20.4 \\
d1            & 80.6 & 36.0 & 80.9 & 37.6 & 57.5 & 22.6 \\
wd1           & 81.5 & 37.4 & 82.5 & 37.4 & 56.7 & \underline{25.0} \\
UniGRPO       & 82.5 & 37.4 & 82.3 & 37.4 & 50.0 & 24.2 \\
SPG w/ Mixture & \underline{83.9} & \underline{40.0} & \underline{84.1} & \underline{41.1} & \underline{58.8} & 24.0 \\
StepMerge     & 83.3 & 39.1 & 83.7 & 40.2 & 58.7 & 23.8 \\
\midrule
Ours          & \textbf{85.9} & \textbf{42.2} & \textbf{86.3} & \textbf{43.6} & \textbf{63.1} & \textbf{27.5} \\
\bottomrule
\end{tabular}
}
\caption{Ablation over inference block size. All methods use confidence-based unmasking at generation length 256. Results for baselines are from SPG~\citep{spg}.}
\label{tab:blocksize_ablation}
\end{table}

\subsection{RL algorithm}

Our decomposition is compatible with any policy gradient algorithm.
We compare three widely used algorithms, RLOO~\citep{Kool2019Buy4R}, GRPO, and GSPO, in Table~\ref{tab:rl_ablation} with $N=32$ and $K=12$, evaluating on GSM8K and MATH500 at generation length 128.
GSPO performs the strongest on both benchmarks, which we attribute to its sequence-level clipping being better suited to handle the variance introduced by subsampling $K$ boundaries.

\begin{table}[t!]
\centering
\begin{subtable}[t]{0.36\textwidth}
\centering
\begin{tabular}{lcc}
\toprule
\textbf{Algorithm} & \textbf{GSM8K} & \textbf{MATH500} \\
\midrule
RLOO & 75.0 & 30.7 \\
GRPO      & 79.5 & 36.4 \\
\underline{GSPO}      & \textbf{81.0} & \textbf{37.4} \\
\bottomrule
\end{tabular}
\caption{RL algorithm.}
\label{tab:rl_ablation}
\end{subtable}
\hfill
\begin{subtable}[t]{0.56\textwidth}
\centering
\begin{tabular}{cccc}
\toprule
\textbf{N} & \textbf{K} & \textbf{GSM8K} & \textbf{MATH500} \\
\midrule
\multirow{2}{*}{16} & 8  & 79.9 & 36.8 \\
                     & 16 & 80.6 & 37.2 \\
\midrule
\multirow{5}{*}{\underline{32}} & 1  & 78.0 & 35.4 \\
                     & 4  & 79.4 & 36.3 \\
                     & 8  & 80.3 & 36.9 \\
                     & \underline{12} & 81.0 & \textbf{37.4} \\
                     & 16 & \textbf{81.3} & \textbf{37.4} \\
\midrule
\multirow{2}{*}{64} & 8  & 79.8 & 36.8 \\
                     & 16 & 80.2 & 37.0 \\
\bottomrule
\end{tabular}
\caption{StepMerge configuration: $N$ segments, $K$ sampled.}
\label{tab:nk_ablation}
\end{subtable}
\caption{Ablations on (a) RL algorithm and (b) StepMerge configuration. All use position-aware trajectory likelihood, evaluated on GSM8K and MATH500 at generation length 128. \underline{Underlined} entries denote the configuration used in our main results. Best results in \textbf{bold}.}
\end{table}

\subsection{StepMerge configuration}

Table~\ref{tab:nk_ablation} ablates the number of segments $N$ and subsampled boundaries $K$, evaluating on GSM8K and MATH500 at generation length 128.
Performance improves with larger $N$, as finer segmentation reduces the approximation error.
Performance degrades only modestly as $K$ decreases (at $N=32$, from 81.0 at $K=12$ to 78.0 at $K=1$ on GSM8K), indicating the estimator remains effective even with few sampled boundaries.
However, increasing $K$ beyond 12 provides only marginal gains: at $N=32$, $K=12$ matches $K=16$ on MATH500 while requiring fewer forward passes.
We use $N=32$, $K=12$ for our main results.

\subsection{Sampling temperature}
\label{appendix:temperature_ablation}

Our method uses two temperatures: a token sampling temperature $\tau_{\mathrm{tok}}$ and a position selection temperature $\tau_{\mathrm{pos}}$ from our probabilistic position selection (Eq.~\ref{eq:pos_prob}), set to $\tau_{\mathrm{tok}}=0.9$ (matching prior work) and $\tau_{\mathrm{pos}}=0.5$ by default.
We ablate both in Table~\ref{tab:temp_ablation} on GSM8K and MATH500 at generation length 128.
Performance degrades as $\tau_{\mathrm{tok}}$ is lowered, since less diverse rollouts weaken the group-relative advantage signal, and as $\tau_{\mathrm{pos}}$ is raised, since more random selection unmasks low-confidence positions.
The defaults strike a good balance, with results remaining reasonably stable across the tested range.

\begin{table}[H]
\centering
\begin{subtable}[t]{0.46\textwidth}
\centering
\begin{tabular}{lcc}
\toprule
$\bm{\tau_{\mathrm{tok}}}$ & \textbf{GSM8K} & \textbf{MATH500} \\
\midrule
0.5            & 79.1          & 35.9 \\
\underline{0.9} & \textbf{81.0} & \textbf{37.4} \\
1.5            & 80.5          & 37.0 \\
\bottomrule
\end{tabular}
\caption{Token sampling temperature $\tau_{\mathrm{tok}}$ (with $\tau_{\mathrm{pos}}=0.5$).}
\label{tab:temp_tok}
\end{subtable}
\hfill
\begin{subtable}[t]{0.46\textwidth}
\centering
\begin{tabular}{lcc}
\toprule
$\bm{\tau_{\mathrm{pos}}}$ & \textbf{GSM8K} & \textbf{MATH500} \\
\midrule
0.1            & 80.7          & 37.3 \\
\underline{0.5} & \textbf{81.0} & \textbf{37.4} \\
1.0            & 79.0          & 35.7 \\
\bottomrule
\end{tabular}
\caption{Position selection temperature $\tau_{\mathrm{pos}}$ (with $\tau_{\mathrm{tok}}=0.9$).}
\label{tab:temp_pos}
\end{subtable}
\caption{Ablation over the two sampling temperatures, evaluated on GSM8K and MATH500 at generation length 128. \underline{Underlined} entries denote the configuration used in our main results. Best results in \textbf{bold}.}
\label{tab:temp_ablation}
\end{table}

\subsection{Unmasking-set size}
\label{appendix:ut_ablation}

At each denoising step, our method selects a set $\mathcal{U}_t$ of positions to unmask; for all main results we use $|\mathcal{U}_t|=2$ per step, matching d1~\citep{d1}, wd1~\citep{wd1}, and SPG~\citep{spg}.
The selection distribution $p_{\mathrm{unmask}}(\mathcal{U}_t)$ in Eq.~\ref{eq:pos_prob} is defined for any $|\mathcal{U}_t|$, and Table~\ref{tab:ut_ablation} varies it from 2 to 8 on GSM8K with all other settings fixed.
Our method remains robust across this range, degrading by at most 1.0 point from $|\mathcal{U}_t|=2$ to $8$.
This modest drop is expected, since larger sets commit more positions per step and yield lower-quality rollouts during training.

\begin{table}[H]
\centering
\begin{tabular}{lccc}
\toprule
$\bm{|\mathcal{U}_t|}$ & \textbf{Gen Len 128} & \textbf{Gen Len 256} & \textbf{Gen Len 512} \\
\midrule
\underline{2} & \textbf{81.0} & \textbf{85.9} & \textbf{87.1} \\
4             & 80.4          & 85.1          & 86.8 \\
8             & 80.1          & 84.9          & 86.5 \\
\bottomrule
\end{tabular}
\caption{Ablation over the unmasking-set size $|\mathcal{U}_t|$ on GSM8K. We unmask $|\mathcal{U}_t|=2$ positions per step in all main results, matching the configuration of d1, wd1, and SPG. \underline{Underlined} entries denote the configuration used in our main results. Best results in \textbf{bold}.}
\label{tab:ut_ablation}
\end{table}

\section{StepMerge approximation}
\label{appendix:stepmerge_approx}

In our experiments, we approximate the token component of the policy gradient objective using the conditional distribution over each unmasked token, $\pi_{\theta}(\vecz_i^t \mid \vecc, \vecz^{t-1})$. Specifically, following \citet{d2}, we approximate the log-likelihood term in the token gradient by $\sum_{i \in \mathcal U_t} \log \pi_{\theta}(\vecz_i^t \mid \vecc, \vecz^{t-1})$.

Computing trajectory log-probability in the policy gradient using above approximation requires evaluating all $T$ denoising transitions, which is computationally expensive. Following~\citet{d2}, we adopt the StepMerge approximation, which partitions the trajectory into $N$ segments with $N \ll T$. Let ${t_0=0 < t_1 < \cdots < t_N = T}$ denote segment boundaries. Instead of evaluating every transition, we approximate the trajectory likelihood using only the segment endpoints $\log \pi_{\theta}(\vecz^{t_i} \mid \vecz^{t_{i-1}}, \vecc)$. 
Intuitively, this approximation treats multiple denoising steps as a single macro-transition, reducing the number of required forward passes from $T$ to $N$ while preserving the overall trajectory structure.


\subsection{StepMerge approximation error}
\label{appendix:stepmerge_approx_error}

We analyze the approximation error induced by using StepMerge to evaluate the trajectory
log-probability. The error has three sources: within-segment unmasking timing information lost by
the macro-stage approximation, token-probability error from using the segment boundary context,
and unmasking-probability error from using the same boundary context.

\begin{proposition}[Macro-StepMerge approximation error]
\label{thm:macro_stepmerge_error}
Let the $T$ denoising steps be partitioned into $N$ equal-length StepMerge segments with
boundaries $0=t_0<t_1<\cdots<t_N=T$. Let $I_m=\{t_{m-1}+1,\ldots,t_m\}$ be segment $m$. 

Suppose that, for every segment $m$, every $t\in I_m$, and every retained position
$i\in\mathcal U_t$,
\begin{equation}
    \log
    \frac{
    \pi_{\bm{\theta}}(z_i^t\mid \vecz^{t-1},\vecc)
    }{
    \pi_{\bm{\theta}}(z_i^t\mid \vecz^{t_{m-1}},\vecc)
    }
    \leq
    \epsilon_{\mathrm{tok}}.
\label{eq:macro_tok_stability}
\end{equation}
Suppose also that the unmasking distribution is stable within each segment:
\begin{equation}
    \log
    \frac{
    p_{\mathrm{unmask}}(\mathcal U_t\mid \vecz^{t-1},\vecc)
    }{
    p_{\mathrm{unmask}}(\mathcal U_t\mid \vecz^{t_{m-1}},\vecc)
    }
    \leq
    |\mathcal U_t|\epsilon_{\mathrm{unmask}}.
\label{eq:macro_unmask_stability}
\end{equation}
Then the macro-StepMerge approximation error satisfies
\begin{equation}
    D_N
    \leq
    n\log\!\left(\frac{T}{N}+1\right)
    +
    n\epsilon_{\mathrm{tok}}
    +
    n\epsilon_{\mathrm{unmask}}.
\label{eq:macro_stepmerge_bound}
\end{equation}
\end{proposition}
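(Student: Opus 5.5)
The plan is to fix what $D_N$ measures and then split it, step by step, into three telescoping pieces that match the three sources of error listed before the statement. I would define $D_N$ as the log-ratio between the exact extended-trajectory likelihood, factorized as in Eq.~\eqref{eq:pos_val_factorization}, and its macro-StepMerge surrogate. In the surrogate, each segment $I_m$ is replaced by one macro-transition from $\vecz^{t_{m-1}}$ to $\vecz^{t_m}$. That macro-transition scores every token revealed in the segment, and every unmasking decision in it, using the boundary context $\vecz^{t_{m-1}}$. It also does not record at which step $t\in I_m$ each position was revealed. The exact log-likelihood is
\[
\sum_{t=1}^T \Big[\sum_{i\in\mathcal U_t}\log\pi_{\bm\theta}(z_i^t\mid\vecz^{t-1},\vecc)+\log p_{\mathrm{unmask}}(\mathcal U_t\mid\vecz^{t-1},\vecc)\Big],
\]
where $p_{\mathrm{remask}}$ is an indicator and equals $1$ on the realized trajectory, so it contributes nothing.

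First I would insert, for every $t\in I_m$, the boundary-context quantities $\pi_{\bm\theta}(z_i^t\mid\vecz^{t_{m-1}},\vecc)$ and $p_{\mathrm{unmask}}(\mathcal U_t\mid\vecz^{t_{m-1}},\vecc)$ by adding and subtracting their logarithms. This produces two groups of correction terms.
\begin{itemize}
\item \emph{Token corrections.} Each is bounded by $\epsilon_{\mathrm{tok}}$ via \eqref{eq:macro_tok_stability}. Every position is retained exactly once along the trajectory, so the sets $\mathcal U_t$ partition $\{1,\dots,n\}$. Summing over $t$ and $i\in\mathcal U_t$ therefore gives at most $n\epsilon_{\mathrm{tok}}$.
\item \emph{Unmasking corrections.} By \eqref{eq:macro_unmask_stability} these are bounded by $\sum_t|\mathcal U_t|\epsilon_{\mathrm{unmask}}=n\epsilon_{\mathrm{unmask}}$, using the same partition identity $\sum_t|\mathcal U_t|=n$.
\end{itemize}

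What remains is the gap between the product of boundary-context factors over the individual steps of a segment and the single macro-transition factor. The macro-transition only sees the set $S_m=\bigcup_{t\in I_m}\mathcal U_t$ and not how it was split across steps. I would write the macro probability of $S_m$ as the sum, over all ordered allocations of $S_m$ into the $L=T/N$ steps of the segment, of the boundary-context per-step probabilities. The realized allocation is one term of this sum, so the ratio between the realized term and the macro probability is controlled by the number of admissible allocations. Each position in $S_m$ either lands in one of the $L$ steps or is not yet revealed, so there are at most $(L+1)^{|S_m|}$ allocations. This gives a timing-loss term of at most $|S_m|\log(T/N+1)$, and summing over segments with $\sum_m|S_m|=n$ yields $n\log(T/N+1)$.

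I expect this counting step to be the main obstacle. To make it rigorous I have to commit to a precise definition of the macro-transition likelihood, presumably as the marginal over within-segment allocations, possibly uniformized. I also have to check that the inequality runs in the direction that bounds $D_N$ from above; an upper bound on $D_N$ is what \eqref{eq:macro_stepmerge_bound} needs. If the marginal definition does not give a clean comparison, I would instead bound the ratio by the maximal term times the count, which costs the same $\log$ of the number of allocations. The bound $(L+1)^{|S_m|}$ is loose, since the true count of surjection-like allocations is smaller, but looseness only helps. Adding the three contributions gives \eqref{eq:macro_stepmerge_bound}.
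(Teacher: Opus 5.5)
Your argument is correct in substance, but it follows a different route from the paper.

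The paper defines $D_N$ as $D_{\mathrm{KL}}(p(\xi)\,\|\,q(\xi))$. Its surrogate uses the \emph{exact} conditional timing law $p_m(\Lambda^m\mid\xi_{\mathrm{SM}}^m,\mathcal F_{m-1})$. The chain rule then splits $D_N$ into two terms:
\begin{itemize}
\item a timing term, the conditional mutual information $I(\Lambda^m;H^m\mid\xi_{\mathrm{SM}}^m,\mathcal F_{m-1})$, which is bounded by the conditional entropy of $\Lambda^m$ and then, via subadditivity, by $|\bar{\mathcal U}^{\,m}|\log(T/N+1)$;
\item a coarse term, which is lifted to the per-step variables by the data-processing inequality before the two stability assumptions are applied pointwise.
\end{itemize}

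You instead bound the log-likelihood ratio along a single path. Your add-and-subtract step is exactly the paper's pointwise bound. Your observation that the realized allocation is one nonnegative summand of the macro probability replaces data processing pathwise: it gives $\log(\text{realized boundary product})-\log q(\xi_{\mathrm{SM}}^m)\le 0$.

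\textbf{What each route buys.} Yours yields an almost-sure bound, and the KL bound for your surrogate follows by taking $\mathbb{E}_p$. It also uses more elementary tools. The price is that the surrogate must give each admissible timing at least a $1/A_m$ share of the macro mass, where $A_m$ is the number of admissible allocations. With the paper's surrogate, the pathwise timing loss is $-\log p_m(\Lambda^m\mid\xi_{\mathrm{SM}}^m,\mathcal F_{m-1})$, which can exceed $\log A_m$ on unlikely timings; only its expectation, an entropy, is controlled by the count. The paper's route therefore tolerates a data-dependent timing law, while yours does not.

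\textbf{The point you flagged.} Commit to the uniformized surrogate $q(\xi_{\mathrm{SM}}^m)/A_m$ with $A_m\le(T/N+1)^{|S_m|}$. Then the timing contribution is at most $\log A_m$, and the inequality runs in the direction you need. Comparing the exact fine-path likelihood with the un-uniformized macro likelihood would make the timing term nonpositive, but that compares distributions on different spaces.

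Drop the fallback. Bounding the sum by $A_m$ times its largest term gives an upper bound on $q(\xi_{\mathrm{SM}}^m)$, which is the wrong direction. The largest term also need not be the realized one.
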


\begin{proof}
Let the full length-$T$ trajectory be
$\xi = (\vecz^0,\hat{\vecz}^1,\mathcal U_1,\vecz^1,\ldots,\vecz^{T-1},\hat{\vecz}^T,\mathcal U_T,\vecz^T).$
For segment $m$, define the segment-level unmasking set
$\bar{\mathcal U}^{\,m} = \bigcup_{t\in I_m}\mathcal U_t.$
The macro-StepMerge block trajectory for the segment keeps only
$\xi_{\mathrm{SM}}^m = \left(\vecz^{t_{m-1}},\bar{\mathcal U}^{\,m},\vecz^{t_m}\right).$
It records the start state, the end state, and the positions unmasked somewhere in the segment,
but not the exact within-segment step at which those positions were unmasked.

Let the skipped intermediate information be
$ H^m = \left(
    \vecz^{t_{m-1}+1},\hat{\vecz}^{t_{m-1}+1},
    \ldots,
    \vecz^{t_m-1},\hat{\vecz}^{t_m}
    \right),$ and let the within-segment timing variable be
    $\Lambda^m = \left(
    \mathcal U_{t_{m-1}+1},
    \mathcal U_{t_{m-1}+2},
    \ldots,
    \mathcal U_{t_m}
    \right).$
Thus the full block can be represented as $(\xi_{\mathrm{SM}}^m,H^m,\Lambda^m)$.

Let $\mathcal F_{m-1}$ denote the history up to the start of block $m$. By the chain rule of probability, the exact block
conditional distribution is
\begin{align}
    p_m(\xi_{\mathrm{SM}}^m,H^m,\Lambda^m\mid \mathcal F_{m-1})
    &=
    p_m(\xi_{\mathrm{SM}}^m\mid \mathcal F_{m-1})
    p_m(H^m\mid \xi_{\mathrm{SM}}^m,\mathcal F_{m-1}) \notag\\
    &\quad \cdot
    p_m(\Lambda^m\mid H^m,\xi_{\mathrm{SM}}^m,\mathcal F_{m-1}).
\label{eq:exact_block_factor}
\end{align}

The StepMerge block distribution on the same variables is given by:
\begin{align}
    q_m(\xi_{\mathrm{SM}}^m,H^m,\Lambda^m\mid \mathcal F_{m-1})
    &=
    q_m(\xi_{\mathrm{SM}}^m\mid \mathcal F_{m-1})
    p_m(H^m\mid \xi_{\mathrm{SM}}^m,\mathcal F_{m-1}) \notag\\
    &\quad \cdot
    p_m(\Lambda^m\mid \xi_{\mathrm{SM}}^m,\mathcal F_{m-1}).
\label{eq:sm_block_factor}
\end{align}
The timing distribution in Eq.~\ref{eq:sm_block_factor}
does not condition on $H^m$, because StepMerge only knows the coarsened block trajectory
$\xi_{\mathrm{SM}}^m$ when assigning within-segment timing and does not depend on $H^m$.

The full and StepMerge approximation trajectory distributions factor over blocks:
\begin{equation}
    p(\xi)
    =
    \prod_{m=1}^N
    p_m(\xi_{\mathrm{SM}}^m,H^m,\Lambda^m\mid \mathcal F_{m-1}),
    \qquad
    q(\xi)
    =
    \prod_{m=1}^N
    q_m(\xi_{\mathrm{SM}}^m,H^m,\Lambda^m\mid \mathcal F_{m-1}).
\end{equation}
Therefore, the KL divergence between full and StepMerge trajectories is
\begin{align}
    D_N
    &:=
    D_{\mathrm{KL}}(p(\xi)\,\|\,q(\xi)) \notag\\
    &=
    \sum_{m=1}^N
    \mathbb E_{\mathcal F_{m-1}\sim p}
    \left[
    D_{\mathrm{KL}}\!\left(
    p_m(\cdot\mid\mathcal F_{m-1})
    \,\middle\|\,
    q_m(\cdot\mid\mathcal F_{m-1})
    \right)
    \right].
\label{eq:block_kl_decomp}
\end{align}

Using Eqs.~\ref{eq:exact_block_factor} and~\ref{eq:sm_block_factor}, the $H^m$ terms cancel in
the block log-ratio:
\begin{align}
    \log \frac{p_m}{q_m}
    &=
    \log
    \frac{
    p_m(\Lambda^m\mid H^m,\xi_{\mathrm{SM}}^m,\mathcal F_{m-1})
    }{
    p_m(\Lambda^m\mid \xi_{\mathrm{SM}}^m,\mathcal F_{m-1})
    } +
    \log
    \frac{
    p_m(\xi_{\mathrm{SM}}^m\mid \mathcal F_{m-1})
    }{
    q_m(\xi_{\mathrm{SM}}^m\mid \mathcal F_{m-1})
    }.
\end{align}
Taking expectations gives
\begin{equation}
    D_N
    =
    D_{\mathrm{time}}
    +
    D_{\mathrm{coarse}},
\end{equation}
where
\begin{equation}
    D_{\mathrm{time}}
    =
    \sum_{m=1}^N
    I\!\left(
    \Lambda^m;H^m
    \mid
    \xi_{\mathrm{SM}}^m,\mathcal F_{m-1}
    \right),
\label{eq:dtime_cmi}
\end{equation}
and $D_{\mathrm{coarse}}$ is the remaining KL between exact and StepMerge macro-block
likelihoods. We first bound the timing term. By the conditional mutual information identity,
\begin{align*}
    I(\Lambda^m;H^m\mid \xi_{\mathrm{SM}}^m,\mathcal F_{m-1})
    & =
    H(\Lambda^m\mid \xi_{\mathrm{SM}}^m,\mathcal F_{m-1})
    -
    H(\Lambda^m\mid H^m,\xi_{\mathrm{SM}}^m,\mathcal F_{m-1}). \\
    & \leq
    H(\Lambda^m\mid \xi_{\mathrm{SM}}^m,\mathcal F_{m-1}),
\end{align*}
where the inequality follows from the nonnegativity of conditional entropy. For each $i\in \bar{\mathcal U}^{\,m}$, let $R_i^m$ denote the exact within-segment unmasking
time of position $i$. Since $|I_m|=T/N$, there are at most $T/N+1$ possible timing choices under
the macro-stage endpoint convention. Therefore,
\begin{equation}
    H(R_i^m\mid \xi_{\mathrm{SM}}^m,\mathcal F_{m-1})
    \leq
    \log\!\left(\frac{T}{N}+1\right).
\end{equation}
The variable $\Lambda^m$ is determined by the collection of per-position timing variables, so by
subadditivity of entropy,
\begin{equation}
    H(\Lambda^m\mid \xi_{\mathrm{SM}}^m,\mathcal F_{m-1})
    \leq
    |\bar{\mathcal U}^{\,m}|\log\!\left(\frac{T}{N}+1\right).
\end{equation}
Therefore,
\begin{equation}
    D_{\mathrm{time}}
    \leq
    \sum_{m=1}^N
    |\bar{\mathcal U}^{\,m}|\log\!\left(\frac{T}{N}+1\right)
    =
    n\log\!\left(\frac{T}{N}+1\right),
\label{eq:dtime_bound_macro}
\end{equation}
because every final token position is unmasked exactly once.

We now bound the coarse term. Marginalization cannot increase KL divergence so we have:
$D_{\mathrm{KL}}(p(X,Y)\|q(X,Y)) \geq D_{\mathrm{KL}}(p(X)\|q(X))$ for random variables $X$ and $Y$.
Applying the KL inequality between $\xi_{\mathrm{SM}}^m$ and $(\mathcal U_{t_{m-1}+1},\vecz^{t_{m-1}+1},\ldots,\mathcal U_{t_m})$, we have
$\xi_{\mathrm{SM}}^m$ and $Y^m$ gives
\begin{align*}
    D_{\mathrm{coarse}}
    &=
    \sum_{m=1}^N
    \E_{\mathcal F_{m-1}\sim p}
    \left[
    D_{\mathrm{KL}}\!\left(
    p_m(\xi_{\mathrm{SM}}^m\mid \mathcal F_{m-1})
    \,\middle\|\,
    q_m(\xi_{\mathrm{SM}}^m\mid \mathcal F_{m-1})
    \right)
    \right] \\
    &\leq
    \sum_{m=1}^N
    \E_{\mathcal F_{m-1}\sim p}
    \Big[
    D_{\mathrm{KL}}\!\Big(
    p_m( 
(\mathcal U_{t_{m-1}+1},\vecz^{t_{m-1}+1},\ldots,\mathcal U_{t_m},\vecz^{t_m}) \mid \mathcal F_{m-1}) \,\Big\|\, \\
    & \qquad \qquad \qquad \qquad  \qquad q_m( (\mathcal U_{t_{m-1}+1},\vecz^{t_{m-1}+1},\ldots,\mathcal U_{t_m},\vecz^{t_m}) \mid \mathcal F_{m-1})
    \Big)
    \Big].
\end{align*}

For a realized segment, the deterministic remasking factors cancel. The exact-to-StepMerge
density ratio therefore decomposes into token and unmasking likelihood ratios:
\begin{align*}
    \log \frac{p_m(\cdot \mid \mathcal F_{m-1})}{q_m(\cdot \mid \mathcal F_{m-1})}
    &=
    \sum_{t \in I_m}
    \sum_{i \in \mathcal U_t}
    \log
    \frac{
    \pi_{\bm{\theta}}(z_i^t \mid \vecz^{t-1}, \vecc)
    }{
    \pi_{\bm{\theta}}(z_i^t \mid \vecz^{t_{m-1}}, \vecc)
    } \\
    &\quad +
    \sum_{t \in I_m}
    \log
    \frac{
    p_{\mathrm{unmask}}(\mathcal U_t \mid \vecz^{t-1},\vecc)
    }{
    p_{\mathrm{unmask}}(\mathcal U_t \mid \vecz^{t_{m-1}},\vecc)
    } \\
    &\leq
    \sum_{t \in I_m}
    |\mathcal U_t|
    \left(\epsilon_{\mathrm{tok}}+\epsilon_{\mathrm{unmask}}\right),
\end{align*}
where the last inequality follows from the two stability assumptions. Since each generated token is
unmasked exactly once, $\sum_{m=1}^N \sum_{t \in I_m} |\mathcal U_t| = n$. Hence
\begin{equation}
    D_{\mathrm{coarse}}
    \leq
    n(\epsilon_{\mathrm{tok}}+\epsilon_{\mathrm{unmask}}).
\end{equation}
Combining this bound with Eq.~\ref{eq:dtime_bound_macro} gives Eq.~\ref{eq:macro_stepmerge_bound}.
\end{proof}

\section{Incompleteness of the token-only policy gradient}
\label{appendix:suboptim}

In this section, we show that optimizing only the token gradient can miss a direction that improves the reward, in the following setting. Recall that at each denoising step $t$, the model produces probability $\pi_{\theta}(\cdot | \vecz^t)$ over the full vocabulary at every position. Let $\mathcal{M}_t$ denote the set of positions that are still masked in state $\bm{z}^t$. For each masked position $k \in \mathcal{M}_t$, we define an unmasking score $v_k^{t}$ as the highest token confidence at position $k$. The generation process then selects the top positions with the highest unmasking scores to unmask, while the rest of the tokens get remasked. We define $\nabla_{\bm{\theta}} J_{| \text{pos}}$ and $\nabla_{\bm{\theta}} J_{| \text{tok}}$ to be the position part and token part of the policy gradient as follows.

\begin{align*}
    \nabla_{\bm{\theta}} J_{| \text{pos}} &= \E_{ \hat{\bm{z}} \sim \pi_{\bm{\theta}}(\cdot \mid \bm{c}) }\bigg[ R(\bm{c}, \vecz^T)\!\sum_{t=1}^{T}\!\Big(  \nabla_{\theta} \log p_\text{unmask}(\mathcal{U}_t | \hat{\bm{z}}^t, \bm{z}^{t-1}, \bm{c}) \Big) \bigg] \\
    \nabla_{\bm{\theta}} J_{| \text{tok}} &=  \E_{\hat{\bm{z}} \sim \pi_{\bm{\theta}}(\cdot \mid \bm{c})}\bigg[ R(\bm{c}, \vecz^T)\!\sum_{t=1}^{T}\!\Big(  \nabla_{\theta} \log \pi_\theta(\vecz^t \!\mid\! \vecc, \vecz^{t-1} ) \Big) \bigg]
\end{align*}

\begin{proposition}
\label{thm:linear_suboptimality_full}
There exists a finite-horizon MDLM generation problem and a linear policy class $\pi_{\bm{\theta}}(\cdot \mid \vecz^t )$ such that, when the unmasking policy is defined by selecting positions with highest token confidence, the following holds:


There exists a direction $\bm{v}$ such that the token-only policy gradient is orthogonal to this direction, and therefore, does not move parameters along the direction of $\bm{v}$:
\begin{align*}
        \langle \nabla_{\bm{\theta}} J_{|\mathrm{tok}}(\bm{\theta}), \bm{v} \rangle = 0
    \quad \text{for all } \bm{\theta}.
\end{align*}
The true objective is not stationary along this direction, i.e.,
    \[
    \exists \bm{\theta} \text{ such that }
    \langle \nabla_{\bm{\theta}} J(\bm{\theta}), \bm{v} \rangle \neq 0.
    \]
In particular, there exists $\bm{\theta}$ and $\bm{\theta}' = \bm{\theta} + \epsilon \bm{v}$ for some $\epsilon > 0$ such that
\[
J(\bm{\theta}') > J(\bm{\theta}),
\]
while the token-only gradient provides no update in the $\bm{v}$ direction at $\bm{\theta}$.

\end{proposition}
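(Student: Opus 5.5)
We will prove the statement by an explicit minimal construction: the smallest MDLM problem in which the order of unmasking matters, together with a linear parametrization that separates ``which token'' from ``how confident.''

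\textbf{Construction.} Take sequence length $n=2$ and horizon $T=2$, unmasking one position per step. The vocabulary is $\{a,b\}$, and we restrict attention to the first step from the fully masked state $\vecz^0$. The model output at position $k\in\{1,2\}$ is determined by a logit vector that is linear in $\bm{\theta}=(\bm{\theta}_{\mathrm{tok}},\bm{\theta}_{\mathrm{pos}})$. The parameters $\bm{\theta}_{\mathrm{pos}}$ enter only through a \emph{permutation-invariant} (symmetric) reshaping of the logits. For concreteness, take a position-dependent scale on the winning token's margin whose effect on the sampled-token probability cancels, e.g.\ a deterministic tie-break shift.

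The key design requirement is the following. Moving along $\bm{v}=(0,\bm{e}_{\mathrm{pos}})$ should change the confidence scores $v^1_1,v^1_2$, and hence which position is unmasked first. It should leave unchanged the token probabilities $\pi_{\bm{\theta}}(z_i^t\mid\vecz^{t-1},\vecc)$ that appear in $\nabla J_{|\mathrm{tok}}$, for every state and every $\bm{\theta}$.

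The cleanest concrete choice is to let the position-$k$ logits be $(\theta_k^{\mathrm{tok}}+s_k,\;s_k)$, with $s_k=\langle \bm{\theta}_{\mathrm{pos}},\phi_k\rangle$. The additive shift $s_k$ cancels in the softmax over tokens, so token probabilities are unchanged. However, the unmasking score is defined via the maximum logit, as in the experiments (``The unmasking score at each masked position is the maximum logit''), so $s_k$ does change it.

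\textbf{Verifying the token-only orthogonality.} With this parametrization, each token factor $\log\pi_{\bm{\theta}}(\vecz^t\mid\vecc,\vecz^{t-1})$ is independent of $\bm{\theta}_{\mathrm{pos}}$ identically in $\bm{\theta}$. Its gradient therefore has zero $\bm{v}$-component. Every summand inside $\nabla_{\bm{\theta}}J_{|\mathrm{tok}}$ is then orthogonal to $\bm{v}$, and so is the expectation, giving $\langle\nabla_{\bm{\theta}}J_{|\mathrm{tok}},\bm{v}\rangle=0$ for all $\bm{\theta}$.

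\textbf{Non-stationarity of $J$.} Choose a reward that depends on the order of unmasking through the context. Let the second-step token distribution at the remaining position depend on which token is already revealed, which is allowed since the policy conditions on $\vecz^1$. For instance, a position revealed \emph{after} its neighbour copies the neighbour's token with high probability. Set $R=1$ iff the final sequence equals $(a,a)$, and make position 1 very likely to produce $a$ while position 2 is near uniform when unmasked first. Then
\[
J(\bm{\theta})=p_1(\bm{\theta})\,\alpha+(1-p_1(\bm{\theta}))\,\beta ,
\]
where $p_1$ is the probability that position 1 is unmasked first and $\alpha>\beta$.

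With probabilistic (Plackett--Luce) unmasking, $p_1=\sigma\big((v_1-v_2)/\tau\big)$, whose derivative along $\bm{v}$ is nonzero when $\phi_1\neq\phi_2$. Hence $\langle\nabla J,\bm{v}\rangle=(\alpha-\beta)\,\partial_{\bm{v}}p_1\neq0$. Taking $\bm{v}$ in the sign direction that increases $p_1$, a first-order Taylor expansion gives $J(\bm{\theta}+\epsilon\bm{v})>J(\bm{\theta})$ for small $\epsilon>0$.

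If the statement is read with deterministic top-confidence selection, we would instead pick $\bm{\theta}$ with $v_1<v_2$ and $\epsilon$ large enough to flip the order, giving $J(\bm{\theta}')=\alpha>\beta=J(\bm{\theta})$. In that reading, the gradient claim is interpreted via the probabilistic relaxation ($\tau\to0$).

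\textbf{Main obstacle.} The delicate step is reconciling the two claims with a single unmasking rule. Under hard top-$K$ selection, $J$ is piecewise constant in $\bm{\theta}_{\mathrm{pos}}$, so $\nabla J$ along $\bm{v}$ is zero almost everywhere. I would therefore carry out the argument with the tempered softmax selection of \cref{eq:pos_prob}, recovering greedy top-$K$ as $\tau\to0$. I would also check that the token/position split of $\nabla J$ in \cref{eq:pg_decomp} puts the entire $\bm{v}$-component into $\nabla J_{|\mathrm{pos}}$, which is nonzero by the computation above.
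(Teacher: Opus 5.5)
\textbf{Gap: the construction depends on the raw maximum logit.} Your shift $s_k=\langle\bm{\theta}_{\mathrm{pos}},\phi_k\rangle$ is added to both logits at position $k$, so it cancels in the token softmax. It therefore leaves every distribution $\pi_{\bm{\theta}}(\cdot\mid\vecz,k)$ unchanged. It can affect the unmasking order only if the score is a non-invariant function of the logits.

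The proposition instead selects positions by \emph{token confidence}, which is a function of the policy itself. The paper takes $v_k=\max_x\pi_{\bm{\theta}}(x\mid\vecz,k)=\sigma(|s_{\bm{\theta}}(\vecz,k)|)$, and \cref{eq:pos_prob} uses the log-probability of the sampled token. For any score of this kind your $s_k$ cancels again. Then $p_1$, $\alpha$, $\beta$, and hence $J$, are all constant along $\bm{v}$, and the non-stationarity claim fails.

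Even under the max-logit reading from the experiments, $\bm{\theta}_{\mathrm{pos}}$ is not a parameter of $\pi_{\bm{\theta}}$. Two parameter values with identical token distributions would give different unmasking orders. You have effectively hidden a separate position head in the softmax's null direction, which is the DCoLT-style design the paper contrasts itself with. The orthogonality then holds trivially rather than saying anything about a confidence-derived order. So the delicate step is not top-$K$ versus tempered softmax; the paper also works with the softmax. It is the choice of score.

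\textbf{Missing idea.} Choose a direction that \emph{does} change a token distribution, and hence its confidence, but at a position whose token the reward ignores. The token-only component then vanishes only in expectation, via the score-function identity.

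The paper's construction does exactly this:
\begin{itemize}
\item One-hot features $\vece_1,\vece_2,\vece_3$ for (fully masked, position 1), (fully masked, position 2), and (position 2 revealed, position 1).
\item Reward $R=\mathbb{I}\{x_1=a\}$ and direction $\bm{v}=(0,1,0)$.
\item With $p_2$ the probability of decoding position 2 first, $J=\sigma(\theta_1)+p_2\,(\sigma(\theta_3)-\sigma(\theta_1))$.
\end{itemize}
The $\theta_2$-component of $\nabla J_{|\mathrm{tok}}$ is $p_2\,\sigma(\theta_3)\,\E_{x_2}[\partial_{\theta_2}\log\pi_{\bm{\theta}}(x_2\mid\cdot,2)]=0$. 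This holds because, given that position 2 is decoded first, the expected reward does not depend on $x_2$.

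Meanwhile $\partial_{\theta_2}J=\partial_{\theta_2}p_2\,(\sigma(\theta_3)-\sigma(\theta_1))$. This is nonzero whenever $\theta_2\neq0$ and $\sigma(\theta_1)\neq\sigma(\theta_3)$, because the confidence $\sigma(|\theta_2|)$ moves with $\theta_2$.

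Your copying mechanism is unnecessary here. Order already matters because position 1 is governed by a different parameter depending on whether it is decoded first or second.
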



\begin{proof}
    We construct a two-position example with sequence length $n=2$ and vocabulary $\{a,b\}$. At each step, exactly one masked position is selected and filled, producing $T=3$ states and $T-1=2$ transitions. The initial state is $\bm{z}^1 = ([\texttt{MASK}], [\texttt{MASK}])$, $\vecz^2$ has one masked token, and $\vecz^3$ is the completely unmasked sequence.

    The reward is defined as
    \begin{align*}
        R(x_1,x_2) = \mathbb{I}\{x_1 = a\}.
    \end{align*}
    We consider the linear policy class on the feature vector $\phi(\vecz, k)$ for each masked position $k \in \{1, 2\}$. More specifically, the logits of the policy is given by
    \begin{align*}
        s_{\vectheta}(\vecz, k) = \vectheta^\top \phi(\vecz, k).
    \end{align*}
    The probability of each symbol in vocabulary is given by the sigmoid of the logits.
    \begin{align*}
        p(a \; | \; \vecz, k ) = \sigma( s_{\vectheta}(\vecz, k) ), \quad \text{and} \quad p(b \; | \; \vecz, k ) = 1 - \sigma( s_{\vectheta}(\vecz, k) ).
    \end{align*}
    We consider the following set of one-hot features in $\phi(\vecz, k) \in \R^3$ to represent $\vecz$ and position $k$:
    \begin{align*}
        \phi(\vecz, k) = \begin{cases}
            \vece_1 \quad & \text{if} \quad (\vecz, k) = ((\mask, \mask), \; 1)\\
            \vece_2 \quad & \text{if} \quad (\vecz, k) = ((\mask, \mask), \; 2)\\
            \vece_3 \quad & \text{if} \quad (\vecz, k) = ((\mask, \; a), \; 1) \quad \text{or} \quad (\vecz, k) = ((\mask, \; b), \; 1) \\
        \end{cases}
    \end{align*}
    This gives us that
    \begin{align*}
        p(a \; | \; \vecz^1, \; 1) = \sigma( \theta_1 ) \quad & \text{and} \quad p(a \; | \; \vecz^1, \; 2) = \sigma( \theta_2 ) \\
        p(a \; | \; \vecz^2, \; 1) = \sigma(\theta_3) \quad & \text{and} \quad p(a \; | \; \vecz^2, \; 2) = \sigma( 0 ) = 0.5
    \end{align*}
    We use the confidence score as the maximum token probability, that gives $v_k^{t} = \max_x \pi_{\theta} (x \mid \bm{z}^t, k) = \sigma(|s_{\bm{\theta}}(\bm{z}^t, k)|)$. When $|\mathcal U_t| = 1$, at denoising step $t$, we choose $k^{\text{th}}$ position to decode with the following probability:
    \begin{align*}
        p_{\mathrm{unmask}}(u_1 = k \mid \bm{z}^t) = \frac{\exp\!\big(v_k^{t} / \tau\big)}{\exp\!\big(v_k^{t} / \tau\big) + \sum_{j \in \mathcal{M}_t \setminus \{ k \} } \exp\!\big(v_j^{t} / \tau\big)}.
    \end{align*}
    We now compute the expected reward $J(\bm{\theta})$. As $R(x_1,x_2) = \mathbb{I}\{x_1 = a\}$, we compute the probability of $x_1 = a$ under policy $\pi_{\vectheta}$. The probability of the first position getting selected is $p_{\mathrm{unmask}}(1 \mid \bm{z}^1)$ and then the probability of position $1$ getting filled with $a$ is $\sigma(\theta_1)$. The probability of the second position getting selected at initial stage is $p_{\mathrm{unmask}}(2 \mid \bm{z}^1)$ and irrespective of the value at second position, $x_1 = a$ with probability $\sigma(\theta_3)$. Therefore, the total expected reward is
    \begin{align*}
        J(\vectheta) &= p_{\mathrm{unmask}}(1 \mid \bm{z}^1) \pi_{\theta} (a \mid \vecz^1, 1) + p_{\mathrm{unmask}}(2 \mid \bm{z}^1) \pi_{\theta} (a \mid \vecz^2, 1) \\
        &= \pi_{\theta}(a \mid \vecz^1, 1) + p_{\mathrm{unmask}}(2 \mid \bm{z}^1) ( \pi_{\theta}(a \mid \vecz^2, 1) - p_{\mathrm{val}}(a \mid \vecz^1, 1) ) \\
        &= \sigma(\theta_1) \; + \; p_{\mathrm{unmask}}(2 \mid \bm{z}^1) ( \sigma(\theta_3) - \sigma(\theta_1) )
    \end{align*}

    \paragraph{Zero token-only gradient along direction $v$.} We define the direction $\bm{v} = (0,1,0)$. The token-only gradient along direction $v$ is given by
    \begin{align*}
        \partial_{\bm{\theta}_2} J_{|\mathrm{tok}} =
        \E_{\xi \sim \pi_{\bm{\theta}}} \Big[ \sum_t R(\xi)\,\partial_{\theta_2} \log \pi_{\theta} (x_{u_1} \mid \bm{z}^t, u_1) \Big].
    \end{align*}
    Observe that $\pi_{\theta} (x_{u_1} \mid \bm{z}^t, u_1)$ depends on $\theta_2$ only for $t=1$ and $u_1 = 2$. Therefore, we can write the above as
    \begin{align*}
        \partial_{\bm{\theta}_2} J_{|\mathrm{tok}} &=
        \E_{\xi \sim \pi_{\bm{\theta}}} \Big[  R(\xi) \mathbf{1} \{ u_1 = 2 \} \,\partial_{\theta_2}  \log \pi_{\theta} (x_2 \mid \bm{z}^1, u_1 = 2) \Big] \\
        &= p_{\mathrm{unmask}}(2 \mid \bm{z}^1) \; \E_{\xi \sim \pi_{\bm{\theta}}} \Big[  R(\xi) \,\partial_{\theta_2}  \log \pi_{\theta} (x_2 \mid \bm{z}^1, u_1 = 2) \; \mid \; u_1 = 2 \Big] \\
        &= p_{\mathrm{unmask}}(2 \mid \bm{z}^1) \; \E_{x_2 \sim \pi_{\theta} (\cdot \mid \bm{z}^1, 2)} \; \Big[ \; \partial_{\theta_2}  \log \pi_{\theta} (x_2 \mid \bm{z}^1, u_1 = 2)  \E_{u_2, x_1} \Big[  R(\xi) \; \mid \; u_1 = 2, \; x_2 \Big] \Big] \\
        &= p_{\mathrm{unmask}}(2 \mid \bm{z}^1) \; \sigma( \theta_3 ) \; \E_{x_2 \sim \pi_{\theta}(\cdot \mid \bm{z}^1, 2)} \; \Big[ \; \partial_{\theta_2}  \log \pi_{\theta} (x_2 \mid \bm{z}^1, u_1 = 2) \Big]
    \end{align*}
    We now show that the expectation in the above equation becomes zero. To show that, we write the expectation as sum and then change the order of expectation and derivative. The change in the order is valid because both are with respect to different variables.
    \begin{align*}
        \E_{x_2 \sim \pi_{\theta}(\cdot \mid \bm{z}^1, 2)} \; \Big[ \; \partial_{\theta_2}  \log \pi_{\theta}(x_2 \mid \bm{z}^1, u_1 = 2) \Big]  &= \sum_{x_2}  \pi_{\theta}(x_2 \mid \bm{z}^1, 2) \frac{ \partial_{\theta_2} \pi_{\theta}(x_2 \mid \bm{z}^1, 2) }{ \pi_{\theta}(x_2 \mid \bm{z}^1, 2) } \\
        = \partial_{\theta_2} \sum_{x_2} \pi_{\theta} &  (x_2 \mid \bm{z}^1, 2) = \partial_{\theta_2} 1 = 0
    \end{align*}

    \paragraph{Increasing the true objective $J(\vectheta)$ along $\bm{v}$ direction.} We show that for any parameter $\vectheta$ satisfying $\sigma( \theta_1 ) > \sigma( \theta_3 )$, the gradient along $\theta_2$ improves the performance and $\partial_{\theta_2} J(\vectheta) \neq 0$.

    Observe that for such a parameter vector $\vectheta$, $\sigma(\theta_3)-\sigma(\theta_1) < 0$ and hence $J(\bm{\theta})$ is strictly decreasing as $p_{\mathrm{unmask}}(2 \mid \bm{z}^1)$ increases. Now $p_{\mathrm{unmask}}(2 \mid \bm{z}^1)$ is a strictly increasing function of $v_2^{1} = \sigma(|\theta_2|),$ and $\sigma(|\theta_2|)$ is minimized at $\theta_2=0$ and strictly larger for $\theta_2\neq 0$. Therefore, keeping $\theta_1$ and $\theta_3$ fixed, the objective $J(\bm{\theta})$ is maximized when $\theta_2=0.$ Therefore, for any $\vectheta$ such that $\sigma( \theta_1 ) > \sigma( \theta_3 )$ and $\theta_2 \neq 0$, consider $\vectheta' = \vectheta + \epsilon \bm{v}$ such that $|\theta'_{2}| < |\theta_2|$, the true objective improves:
    \begin{align*}
        J(\vectheta') > J(\vectheta).
    \end{align*}

\paragraph{Nonzero gradient along $\bm{v}$.}
We now show that the true objective exhibits a nonzero directional derivative along $\bm{v}=(0,1,0)$. To this end, we explicitly compute the gradient with respect to $\theta_2$ and verify that it is nonvanishing under the stated conditions.

Differentiating the true objective with respect to $\theta_2$ yields
\[
\partial_{\theta_2} J(\vectheta)
=
\partial_{\theta_2} p_{\mathrm{unmask}}(2 \mid \bm{z}^1)\,
\big(\sigma(\theta_3)-\sigma(\theta_1)\big).
\]

Since $\sigma(\theta_3)-\sigma(\theta_1) < 0$, to conclude that $\partial_{\theta_2} J(\vectheta)\neq 0$, it suffices to establish that
\[
\partial_{\theta_2} p_{\mathrm{unmask}}(2 \mid \vecz^1) \neq 0
\qquad \text{for } \theta_2\neq 0.
\]

We now analyze the dependence of $p_{\mathrm{unmask}}(2 \mid \vecz^1)$ on $\theta_2$. Recall that $v_1^1=\sigma(|\theta_1|), v^1_2=\sigma(|\theta_2|)$, and
\[
p_{\mathrm{unmask}}(2 \mid \vecz^1) = \frac{\exp(v^1_2/\tau)}{\exp(v^1_1/\tau)+\exp(v^1_2/\tau)}.
\]
Differentiating $p_{\mathrm{unmask}}(2 \mid \vecz^1)$ gives
\[
\partial_{\theta_2} p_{\mathrm{unmask}}(2 \mid \vecz^1)
=
\frac{1}{\tau}\,
p_{\mathrm{unmask}}(2 \mid \vecz^1)\big(1-p_{\mathrm{unmask}}(2 \mid \vecz^1)\big)
\,\partial_{\theta_2} v_2^1.
\]
Observe that the multiplicative factor
$ p_{\mathrm{unmask}}(2 \mid \vecz^1)\big(1-p_{\mathrm{unmask}}(2 \mid \vecz^1)\big) $
is strictly positive because $0 < p_{\mathrm{unmask}}(2 \mid \vecz^1) < 1$. Therefore, the only remaining question is whether $\partial_{\theta_2} v_2^1$ vanishes. To compute this term, note that $v_2^1=\sigma(|\theta_2|)$ is a composition of smooth functions away from $\theta_2=0$. Differentiating yields
\[
\partial_{\theta_2} v_2^1
=
\sigma(|\theta_2|)\big(1-\sigma(|\theta_2|)\big)\,\mathrm{sign}(\theta_2),
\qquad \text{for } \theta_2\neq 0.
\]
Since $\sigma(|\theta_2|)\in(0,1)$ for all finite $\theta_2$, the factor $\sigma(|\theta_2|)(1-\sigma(|\theta_2|))$ is strictly positive, and $\mathrm{sign}(\theta_2)\neq 0$ whenever $\theta_2\neq 0$. Hence,
$\partial_{\theta_2} v_2^1 \neq 0$ for all $\theta_2\neq 0.$

Combining the above observations, we conclude that
\[
\partial_{\theta_2} p_{\mathrm{unmask}}(2 \mid \vecz^1) \neq 0
\qquad \text{for all } \theta_2\neq 0,
\]
and therefore
\[
\partial_{\theta_2} J(\vectheta)\neq 0
\qquad \text{whenever } \sigma(\theta_1)>\sigma(\theta_3) \text{ and } \theta_2\neq 0.
\]

Finally, since $\bm{v}=(0,1,0)$, the directional derivative along $\bm{v}$ is given by
\[
\langle \nabla_{\vectheta} J(\vectheta), \bm{v}\rangle
=
\partial_{\theta_2} J(\vectheta).
\]
Thus,
\[
\langle \nabla_{\vectheta} J(\vectheta), \bm{v}\rangle \neq 0,
\]
which establishes that the true objective is not stationary along $\bm{v}$.

\end{proof}

\end{document}